\pdfoutput=1

\documentclass[10pt,twocolumn,letterpaper]{article}
\usepackage{cvpr}              

\usepackage{graphicx}
\usepackage{amsmath}
\usepackage{amssymb}
\usepackage{amsthm}

\usepackage{booktabs}
\usepackage{capt-of}
\usepackage{multirow}     
\usepackage{booktabs}     
\usepackage{colortbl}     
\usepackage{xcolor}       
\usepackage[pagebackref,breaklinks,colorlinks]{hyperref}
\usepackage{algorithm}
\usepackage{algorithmic}

\newtheorem{theorem}{Theorem}

\newtheorem{proposition}{Proposition}
\newtheorem{corollary}{Corollary}


\usepackage[capitalize]{cleveref}
\crefname{section}{Sec.}{Secs.}
\Crefname{section}{Section}{Sections}
\Crefname{table}{Table}{Tables}
\crefname{table}{Tab.}{Tabs.}
\usepackage{graphicx}
\usepackage{amsmath}
\usepackage{amssymb}
\usepackage{booktabs}
\usepackage{capt-of}
\usepackage{multirow}     
\usepackage{booktabs}     
\usepackage{colortbl}     
\usepackage{xcolor}       
\usepackage[pagebackref,breaklinks,colorlinks]{hyperref}

\usepackage[capitalize]{cleveref}
\crefname{section}{Sec.}{Secs.}
\Crefname{section}{Section}{Sections}
\Crefname{table}{Table}{Tables}
\crefname{table}{Tab.}{Tabs.}








\definecolor{cvprblue}{rgb}{0.21,0.49,0.74}


\title{Upsample Anything: A Simple and Hard to Beat Baseline for Feature Upsampling}

\author{
Minseok Seo$^{1}$ \quad
Mark Hamilton$^{2,3}$ \quad
Changick Kim$^{1}$ \\
[0.25em]
$^{1}$KAIST \qquad
$^{2}$MIT \qquad
$^{3}$Microsoft \\
[0.5em]
{\tt\small minseok.seo@kaist.ac.kr}
}

\begin{document}
\twocolumn[{%
\renewcommand\twocolumn[1][]{#1}  %
\maketitle
\vspace{-2em}
\begin{center}
    \includegraphics[width=1.0\textwidth]{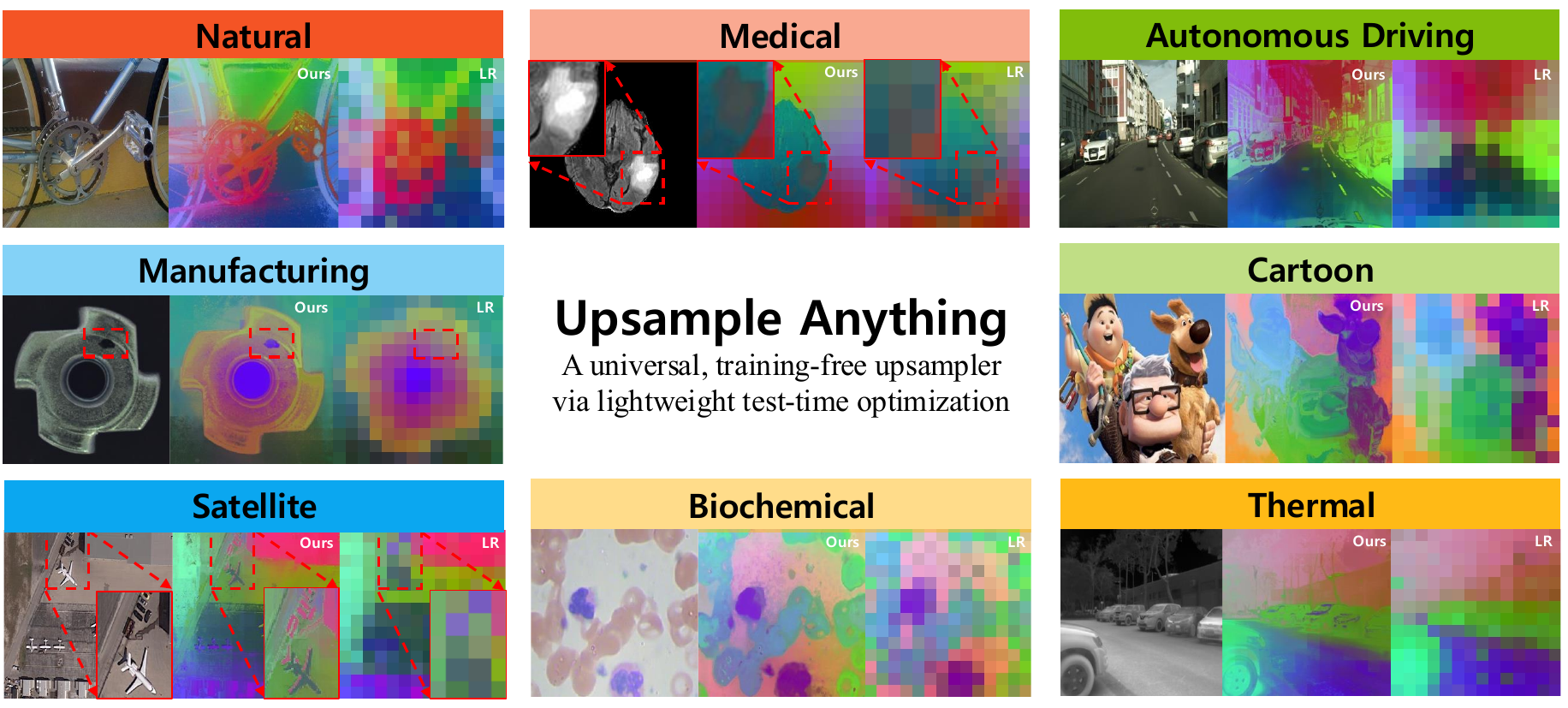}
    \captionof{figure}{Our method performs lightweight test-time optimization \textbf{($\approx$0.419 s/image)} without requiring any dataset-level training.It generalizes seamlessly across domains while maintaining consistent reconstruction quality for every image. (All examples are randomly selected, without cherry-picking.)
    }
    \label{fig:teaser}
    \vspace{1em}
\end{center}
}]
\begin{abstract}
We present \textbf{Upsample Anything}, a lightweight test-time optimization (TTO) framework that restores low-resolution features to high-resolution, pixel-wise outputs without any training.
Although Vision Foundation Models demonstrate strong generalization across diverse downstream tasks, their representations are typically downsampled by 14×/16× (e.g., ViT), which limits their direct use in pixel-level applications.
Existing feature upsampling approaches depend on dataset-specific retraining or heavy implicit optimization, restricting scalability and generalization.
Upsample Anything addresses these issues through a simple per-image optimization that learns an anisotropic Gaussian kernel combining spatial and range cues, effectively bridging Gaussian Splatting and Joint Bilateral Upsampling.
The learned kernel acts as a universal, edge-aware operator that transfers seamlessly across architectures and modalities, enabling precise high-resolution reconstruction of features, depth, or probability maps.
It runs in only $\approx0.419 \text{s}$ per 224×224 image and achieves state-of-the-art performance on semantic segmentation, depth estimation, and both depth and probability map upsampling.
\textbf{Project page:} \href{https://seominseok0429.github.io/Upsample-Anything/}{https://seominseok0429.github.io/Upsample-Anything/}

\end{abstract}    
\section{Introduction}
\label{sec:intro}
\begin{figure*}[t!]
    \centering
\includegraphics[width=2.0\columnwidth]{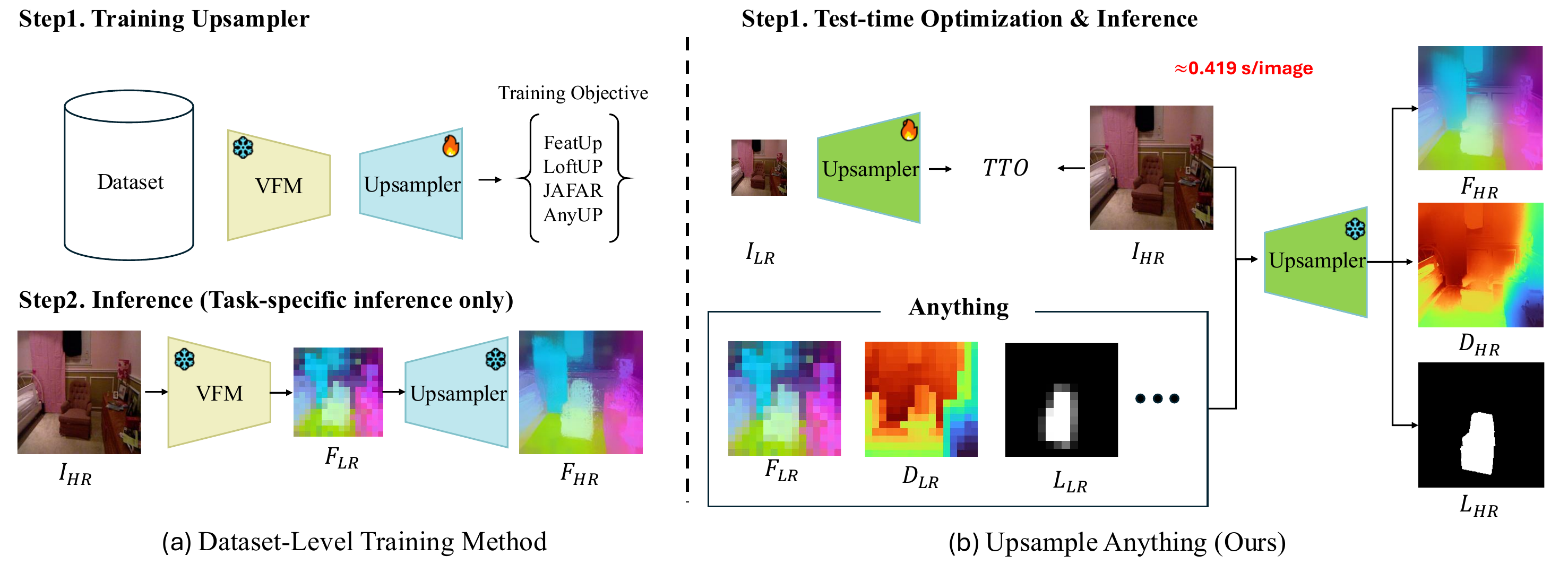}
    \caption{Comparison of dataset-level training and our test-time optimization (TTO). (a) Dataset-level methods (FeatUp, LoftUp, JAFAR, AnyUp) require paired training data and handle only 2D feature maps. (b) Our Upsample Anything performs TTO using only one HR image and generalizes to feature, depth, segmentation, and even 3D features.}
    \label{fig:fig2}
\end{figure*}
Modern computer vision systems for pixel-level prediction tasks such as semantic, instance, and panoptic segmentation~\cite{everingham2010pascal,lin2014microsoft,cordts2016cityscapes,zhou2019semantic} or depth estimation~\cite{silberman2012indoor,yang2024depth,ke2024repurposing} often use an encoder–decoder paradigm.
The encoder extracts hierarchical features that capture semantic abstraction from the input image, and the decoder reconstructs dense, task-specific predictions such as class maps, depth, or optical flow at the original spatial resolution.

Recent advances in large-scale self-supervised learning~\cite{caron2021emerging,oquab2023dinov2,he2022masked,zhou2021ibot,radford2021learning,zhai2023sigmoid} introduce general-purpose encoders called Vision Foundation Models (VFMs) that can serve as universal backbones across diverse downstream tasks.
This paradigm shift has led to the emergence of Vision Foundation Models (VFMs) such as DINO~\cite{oquab2023dinov2}, CLIP~\cite{radford2021learning}, SigLIP~\cite{zhai2023sigmoid}, and MAE~\cite{he2022masked}, which provide transferable and semantically rich features with minimal task-specific fine-tuning.

By decoupling the encoder from the downstream task, these foundation models dramatically reduce the data and training costs needed for adaptation while maintaining strong generalization across domains.
However, despite these advantages, high-performing pixel-level systems still require large and complex decoders such as DPT~\cite{ranftl2021vision}, UPerNet~\cite{xiao2018unified}, or SegFormer~\cite{xie2021segformer} to recover spatial details from low-resolution features.
Foundation features are typically downsampled by a factor of 14-16 in Vision Transformer~\cite{dosovitskiy2020image} architectures or equivalently through multiple pooling stages in CNN-based backbones~\cite{liu2022convnet,he2016deep}.
As a result, they lack fine-grained spatial information, forcing decoders to rely on heavy upsampling networks that are computationally expensive, memory-intensive, and often difficult to generalize to new architectures or resolutions.

To address this resolution gap, a growing line of research has explored feature upsampling methods~\cite{fu2024featup, suri2024lift,huang2025loftuplearningcoordinatebasedfeature,couairon2025jafar,wimmer2025anyup} to restore spatial details in pretrained representations without modifying the encoder.
These methods learn an upsampling operator that maps low-resolution foundation features to higher resolutions, effectively bridging the semantic–spatial gap before the downstream decoder.
In doing so, they can achieve strong performance across diverse pixel-level tasks even with a single 1×1 convolutional decoder.

Feature upsampling approaches can be broadly categorized into two paradigms depending on how the upsampler is optimized: (a) dataset-level training~\cite{fu2024featup, suri2024lift,huang2025loftuplearningcoordinatebasedfeature,couairon2025jafar,wimmer2025anyup} and (b) test-time optimization (TTO)~\cite{fu2024featup}, as illustrated in ~\cref{fig:fig2}.
In the dataset-level training paradigm, the feature upsampler is trained on a target dataset either by generating pseudo-labels using methods such as SAM~\cite{kirillov2023segment} for zero-shot supervision~\cite{huang2025loftuplearningcoordinatebasedfeature} or by adopting multi-view training objectives~\cite{fu2024featup, suri2024lift, couairon2025jafar, wimmer2025anyup}.

While this approach can generalize to certain unseen data, it still requires dataset-level training, meaning that the upsampler must be retrained whenever the backbone architecture or target dataset changes.
Moreover, due to heavy memory usage, most trained upsamplers can only operate up to 112–224 pixels in resolution.
The test-time optimization paradigm, exemplified by methods such as FeatUp (Implicit), avoids dataset-level training by optimizing the feature upsampler directly at inference time for each test image.
Although this removes the need for offline training, the per-image optimization is computationally expensive, taking an average of 49 seconds to converge for a 224-sized image.

We propose Upsample Anything, a test-time optimization (TTO) framework for feature upsampling, as illustrated in~\cref{fig:fig2}-(b).
Unlike previous methods requiring dataset-level training, it performs lightweight per-image optimization and processes a 224-sized image in only $\approx$ 0.419 s.
Given an input image, Upsample Anything resizes the RGB guidance to match the low-resolution (LR) feature-map size, reconstructs the high-resolution (HR) color image through optimization, and learns pixelwise anisotropic Gaussian parameters—$(\sigma_x, \sigma_y, \theta, \sigma_r)$—that define a continuous spatial–range splatting kernel.
These optimized kernels are then applied to the LR feature maps from a foundation encoder to produce HR feature maps aligned with the original image grid.
Although the optimization is guided only by color reconstruction, the learned kernels implicitly capture geometry and semantics.
As a result, Upsample Anything not only enhances 2D feature resolution but also generalizes to other pixel- or voxel-level signals (e.g., depth, segmentation, or even 3D representations) without retraining.
This property highlights its potential as a unified, lightweight, and resolution-free upsampling operator across 2D and 3D domains.
Despite requiring no dataset-level training, it consistently achieves state-of-the-art or near-SOTA performance on multiple pixel-level benchmarks, including semantic segmentation and depth estimation.
\section{Related Works}

\subsection{Joint Bilateral Upsampling}

Joint Bilateral Upsampling (JBU), first introduced by~\cite{kopf2007joint}, is a classic non-learning, edge-preserving upsampling technique designed to transfer structural details from a high-resolution guidance image to a low-resolution signal such as a depth map or label map.
Formally, JBU computes each high-resolution output pixel $\hat{F}{hr}[p]$ as a weighted average of nearby low-resolution pixels $F{lr}[q]$, as follows:
{\footnotesize
\begin{equation}
\hat{F}_{hr}[p] = \frac{1}{Z_p} \sum_{q \in \Omega(p)} F_{lr}[q]
\exp\!\left(-\frac{\|p - q\|^2}{2\sigma_s^2}\right)
\exp\!\left(-\frac{\|I[p] - I[q]\|^2}{2\sigma_r^2}\right),
\label{eq:jbu}
\end{equation}
}
where $I$ denotes the high-resolution guidance image, and $\sigma_s$, $\sigma_r$ control the spatial and range sensitivity, respectively.
Here, $Z_p$ is a normalization factor ensuring that all weights sum to one, and $\Omega(p)$ denotes the spatial neighborhood around pixel $p$ in the low-resolution domain.
By coupling spatial proximity and color similarity, JBU preserves edges and fine details while interpolating missing information, enabling high-quality restoration of dense signals without any additional learning.
This formulation has inspired numerous modern variants and learnable extensions that generalize bilateral filtering~\cite{fu2024featup} to feature space and neural representations.
Although JBU performs upsampling without any training, the resulting quality remains limited due to its fixed, hand-crafted kernel design.

\subsection{Feature Upsampling}
A pioneering work in feature upsampling is \textbf{FeatUp}~\cite{fu2024featup}, which proposed two model-agnostic modules: \textit{FeatUp (JBU)} and \textit{FeatUp (Implicit)}. 
The former generalizes Joint Bilateral Upsampling (JBU)~\cite{kopf2007joint} to high-dimensional feature space by replacing the fixed Gaussian range kernel with a learnable MLP, while the latter parameterizes high-resolution features as an implicit function $F_{hr}=\text{MLP}(x,I(x))$ optimized per image. 
While FeatUp effectively restores spatial detail from foundation features, it either requires dataset-level training or incurs long per-image optimization.

Several follow-up studies further improved spatial fidelity through learnable upsamplers. 
\textbf{LiFT}~\cite{suri2024lift} employed a lightweight U-Net-like upsampler with reconstruction loss, 
\textbf{LoftUp}~\cite{huang2025loftuplearningcoordinatebasedfeature} integrated RGB coordinates via cross-attention with pseudo-groundtruth supervision, 
and \textbf{JAFAR}~\cite{couairon2025jafar} introduced joint-attention filtering for semantic–structural alignment. 
\textbf{AnyUp}~\cite{wimmer2025anyup} proposed resolution-conditioned kernels for scalable upsampling. 
Despite their strong performance, these methods rely on dataset-level training, which makes them less adaptive to unseen domains.
They often generalize reasonably well but still exhibit suboptimal performance when facing novel architectures, resolutions, or out-of-distribution data.

\begin{figure*}[t!]
    \centering
    \includegraphics[width=1.65\columnwidth]{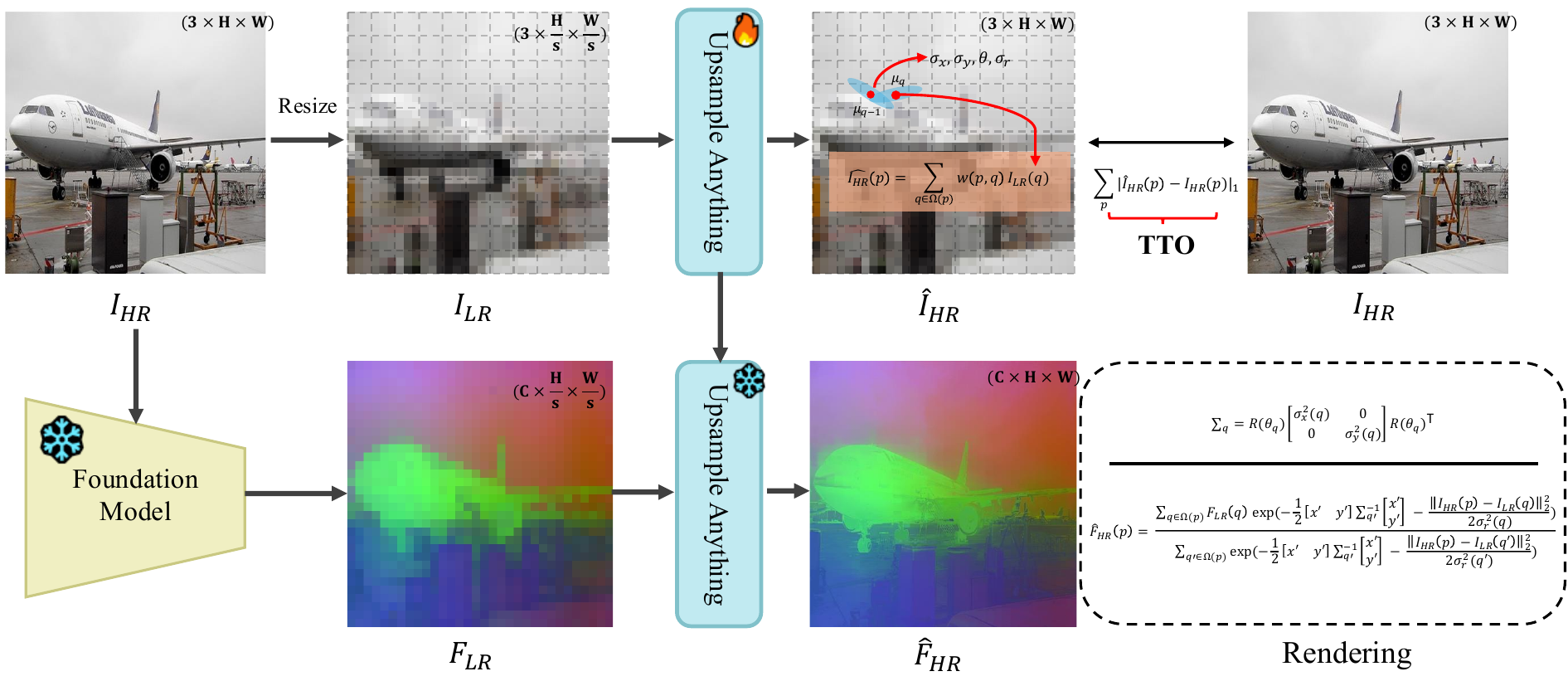}
    \caption{
\textbf{Overview of Upsample Anything.}
Given a high-resolution image $I_{hr}$, we downsample it to $I_{lr}$ and optimize GSJBU to reconstruct $I_{hr}$, learning per-pixel anisotropic kernels 
$\{\sigma_x, \sigma_y, \theta, \sigma_r\}$ via test-time optimization (TTO).
The learned kernels are then applied to foundation features $F_{lr}$ for rendering the high-resolution features $F_{hr}$, achieving pixel-wise anisotropic joint bilateral upsampling.
}
\label{fig:main}
\end{figure*}

\section{Preliminaries}
\paragraph{2D Gaussian Splatting (2DGS).}
Recent works extend 3D Gaussian Splatting (3DGS)~\cite{kerbl20233d} from volumetric radiance fields to 2D image representations~\cite{zhang2024gaussianimage,zhang2025image,zhu2025large}.
In the image–plane setting, a pixel or small region is represented by a Gaussian kernel
\begin{equation}
G_i(x) = \exp\!\Big(-\tfrac{1}{2}(x-\mu_i)^{\top}\Sigma_i^{-1}(x-\mu_i)\Big),
\end{equation}
where $\mu_i \in \mathbb{R}^2$ is the center, $\Sigma_i \in \mathbb{R}^{2\times2}$ is a positive–definite covariance (encoding scale and orientation), and $\alpha_i$ is a per–kernel weight.
The rendered image (or feature map) is obtained by normalized alpha blending:
\begin{equation}
I(x) = \sum_i w_i,c_i, \qquad
w_i = \frac{\alpha_i G_i(x)}{\sum_j \alpha_j G_j(x)},
\end{equation}
with $c_i$ the color or feature associated with kernel $i$.
Because all kernels lie on a single 2D plane, rendering reduces to a normalized weighted summation without depth sorting. This process is often described as rasterization/alpha blending and can also be interpreted as a spatially varying anisotropic convolution.
This property enables real-time, pose-free optimization directly in 2D image/feature domains.

\paragraph{Relation to Joint Bilateral Upsampling (JBU).}
JBU~\cite{kopf2007joint} computes each high-resolution (HR) output $F_{\mathrm{hr}}(p)$ as a normalized weighted average of low-resolution (LR) samples $F_{\mathrm{lr}}(q)$, where the weights depend on both spatial proximity and guidance-image similarity (see Eq.~\ref{eq:jbu}).
Viewed geometrically, each LR sample $q$ can be regarded as a Gaussian centered at $\mu_q{=}q$ with an \emph{isotropic} covariance $\Sigma_q{=}\sigma_s^2 I$.
Under this view, JBU corresponds to a discrete and isotropic instance of a guidance-modulated 2D Gaussian Splatting (2DGS) process, where the range term is provided by the HR guidance image.
In contrast, Upsample Anything assigns per-pixel anisotropic covariances $\Sigma_p$ and range scales $\sigma_{r,p}$ through test-time optimization, enabling adaptive fusion across space and range.
This pixel-level Gaussian parameterization captures locally varying orientations and scales, making Upsample Anything a continuous, edge-preserving extension of JBU within the 2DGS framework.
For the exact derivation and a more formal discussion of how this formulation differs from a simple combination, please refer to Appendix~\textsection12.

\section{Methods}

\subsection{Overview}
As illustrated in \cref{fig:main}, our method, Upsample Anything, consists of two stages: 
(i) \textbf{test-time optimization (TTO)} and 
(ii) \textbf{feature rendering}.
In the TTO stage, Upsample Anything learns per-pixel anisotropic Gaussian parameters 
$\{\sigma_x, \sigma_y, \theta, \sigma_r\}$ 
by reconstructing the high-resolution image $I_{hr}$ from its patch-wise downsampled version $I_{lr}$.
This process enables each pixel to learn how spatially and photometrically similar neighbors should be blended-effectively discovering local mixing weights that generalize beyond the image domain.
Once optimized, these Gaussian kernels are directly transferred to the foundation feature space, 
where the low-resolution feature map $F_{lr}\!\in\!\mathbb{R}^{C\times H/s\times W/s}$ 
is splatted to produce the high-resolution feature $F_{hr}\!\in\!\mathbb{R}^{C\times H\times W}$ 
using the same learned anisotropic weighting mechanism.
Because the splatting weights depend only on spatial–range similarity, 
this transfer is naturally domain-agnostic, allowing the learned kernels to act as universal upsampling operators.
Excluding the feature extraction time of the Vision Foundation Model (VFM), 
the entire optimization and inference for a $224{\times}224$ image takes  $\approx 0.419$\,s.
\subsection{Algorithm Design}
\label{sec:al}
Our design is inspired by classical Joint Bilateral Upsampling (JBU)~\cite{kopf2007joint}.
The key merit of JBU is \emph{transferability}: it does not hallucinate new values but instead \emph{learns mixing weights} that decide how much to blend neighboring samples, which makes it naturally model-/task-agnostic.
However, standard JBU in ~\cref{eq:jbu} is limited by \emph{global} $(\sigma_s,\sigma_r)$ and \emph{isotropic} range/spatial kernels, reducing expressivity near complex structures.

\paragraph{Per-pixel anisotropic kernels.}
To overcome these limits, Upsample Anything assigns a \emph{per-LR-pixel} anisotropic Gaussian with parameters
$\{\sigma_x(q),\,\sigma_y(q),\,\theta(q),\,\sigma_r(q)\}$ for each low-resolution location $q$.
Let the spatial covariance be
{\footnotesize
\begin{equation}
\Sigma_q =
R(\theta_q)
\begin{bmatrix}
\sigma_x^2(q) & 0 \\[3pt]
0 & \sigma_y^2(q)
\end{bmatrix}
R^\top(\theta_q);
\quad
R(\theta_q) =
\begin{bmatrix}
\cos\theta_q & -\sin\theta_q \\[3pt]
\sin\theta_q & \cos\theta_q
\end{bmatrix}.
\end{equation}
}

For an HR coordinate $p$, the \emph{unnormalized} spatial weight and range (guidance) weight are
\begin{align}
\small
\log w^{\text{s}}_{p\leftarrow q}
&= -\tfrac{1}{2}\,(p-\mu_q)^\top \Sigma_q^{-1}(p-\mu_q), \\
\log w^{\text{r}}_{p\leftarrow q}
&= -\frac{\|I(p)-I(q)\|^2}{2\,\sigma_r^2(q)}.
\end{align}

where $\mu_q$ is the LR center projected to HR coordinates and $I(\cdot)$ is the HR guidance image.
The final normalized mixing weight is
\begin{equation}
w_{p\leftarrow q} \;=\; \frac{\exp\!\big(\log w^{\text{s}}_{p\leftarrow q}+\log w^{\text{r}}_{p\leftarrow q}\big)}%
{\sum\limits_{q'\in\Omega(p)} \exp\!\big(\log w^{\text{s}}_{p\leftarrow q'}+\log w^{\text{r}}_{p\leftarrow q'}\big)}.
\end{equation}

\paragraph{Feature rendering (no value synthesis).}
Given low-resolution features $F_{lr}\in\mathbb{R}^{C\times H_l\times W_l}$ and scale $s$, we render the HR feature $F_{hr}\in\mathbb{R}^{C\times (sH_l)\times (sW_l)}$ by \emph{pure mixing}:
\[
F_{hr}(p) \;=\; \sum_{q\in\Omega(p)} w_{p\leftarrow q}\; F_{lr}(q).
\]
This strictly reweights existing LR features (no content generation), hence transfers across backbones and tasks.

\paragraph{Why it generalizes.}
Unlike feed-forward upsamplers that require dataset-level training~\cite{fu2024featup, suri2024lift, huang2025loftuplearningcoordinatebasedfeature, couairon2025jafar, wimmer2025anyup},
Upsample Anything learns only per-image, pixel-wise mixing weights from the HR guidance through a test-time optimization process, and reuses these weights to splat $F_{lr}$ into $F_{hr}$.
Because the mechanism is based on edge- and range-aware interpolation rather than value synthesis,
Upsample Anything is inherently \emph{resolution-free, model-agnostic}, and robust to unseen domains.

\subsection{Test-Time Optimization}
After defining the Upsample Anything formulation in~\cref{sec:al}, the next step is to optimize its per-pixel parameters
$\{\sigma_x, \sigma_y, \theta, \sigma_r\}$.
Our key idea is inspired by the patchified processing of modern Vision Foundation Models (VFMs): 
since VFMs downsample images by a fixed stride to extract low-resolution features, 
we emulate this process during optimization.

Specifically, the high-resolution image $I_{hr}$ is downsampled to $I_{lr}$ by bilinear interpolation with a stride $s$,
and the GSJBU parameters are optimized under a reconstruction objective from $I_{lr}$ back to $I_{hr}$:
\[
\mathcal{L}_{\text{TTO}}
= \big\| \mathrm{GSJBU}(I_{lr}) - I_{hr} \big\|_1.
\]
This test-time optimization finds image-specific, pixel-wise kernels that best reconstruct the guidance signal.

After the TTO process, the learned kernels are reused to render the high-resolution feature map:
\[
F_{hr} = \mathrm{GSJBU}(F_{lr};\, \hat{\sigma}_x, \hat{\sigma}_y, \hat{\theta}, \hat{\sigma}_r),
\]
where the optimized parameters $\{\hat{\sigma}_x, \hat{\sigma}_y, \hat{\theta}, \hat{\sigma}_r\}$ 
are directly transferred to the foundation feature space to upsample $F_{lr}$ into $F_{hr}$.

\section{Experiments}

\subsection{Experimental Setting}
Following prior feature-upscaling works~\cite{fu2024featup,suri2024lift,huang2025loftuplearningcoordinatebasedfeature,couairon2025jafar,wimmer2025anyup}, we evaluate Upsample Anything on semantic segmentation (COCO, PASCAL-VOC, ADE20K) and depth estimation (NYUv2). For segmentation, we use a single $1 \times 1$ convolution head (linear probe), identical to prior settings.
For depth, we adopt a DPT-style decoder head, consistent with~\cite{huang2025loftuplearningcoordinatebasedfeature,wimmer2025anyup}. To compare backbones, we consider DINOv1, DINOv2, DINOv3, CLIP, and ConvNeXt, covering both transformer and convolutional families; unless otherwise specified, the default backbone is DINOv2-S.

Unlike prior approaches restricted to feature maps, Upsample Anything applies to general bilateral upsampling. Accordingly, we further evaluate (i) depth-map upsampling on NYUv2 and Middlebury, and (ii) probability-map upsampling on Cityscapes—each guided by the corresponding high-resolution RGB image.
\subsection{Implementation Details}

Our Upsample Anything is implemented purely in PyTorch without any chunked or patch-wise processing. 
All computations are performed in a fully parallel manner over the entire high-resolution grid. 
Gaussian parameters are initialized as $\sigma_x=\sigma_y=16.0$, $\sigma_r=0.12$, and $\theta=0$, 
and are optimized per-pixel using the Adam optimizer with a learning rate of $1\times10^{-3}$. 
The model performs test-time optimization for only 50 iterations in total, without any batching or data augmentation. 
Please refer to the supplementary material for additional implementation details, hyperparameter choices, and ablations.
\begin{table}[t]
\centering
\resizebox{\linewidth}{!}{%
\setlength{\tabcolsep}{6pt}
\renewcommand{\arraystretch}{1.15}
\begin{tabular}{l|cc|cc|cc}
\hline
\multirow{2}{*}{Method} &
\multicolumn{2}{c|}{\textbf{COCO}} &
\multicolumn{2}{c|}{\textbf{PASCAL-VOC}} &
\multicolumn{2}{c}{\textbf{ADE20k}} \\ 
 & mIoU ($\uparrow$) & Acc. ($\uparrow$)
 & mIoU ($\uparrow$) & Acc. ($\uparrow$)
 & mIoU ($\uparrow$) & Acc. ($\uparrow$) \\
\hline
Bilinear & 60.43 & 80.18 & 81.27 & 95.96 & 41.48 & 74.95 \\
FeatUp   & 60.96 & 80.65 & 81.91 & 96.27 & 41.92 & 75.41 \\
LoftUp   & 61.08 & 80.72 & 81.84 & 96.33 & 41.83 & 75.36 \\
JAFAR    & 60.87 & 80.51 & 82.05 & 96.21 & 41.74 & 75.22 \\
AnyUp    & 61.25 & 80.89 & 82.18 & 96.39 & 42.02 & 75.63 \\
\rowcolor[gray]{0.9}
\textbf{Upsample Anything} & \textbf{61.41} & \textbf{81.34} & \textbf{82.22} & \textbf{96.90} & \textbf{42.95} & \textbf{76.52} \\
\hline
\rowcolor[gray]{0.9} \textbf{Upsample Anything (prob.)} & \textbf{63.40} & \textbf{83.73} & \textbf{84.57} & \textbf{97.42} & \textbf{44.29} & \textbf{78.58} \\ \hline
\end{tabular}
} 
\caption{Comparison of different upsampling methods on COCO, PASCAL-VOC, and ADE20k datasets.}
\label{tab:seg_results}
\end{table}

\subsection{quantitative results}
\paragraph{Semantic Segmentation.}
For a fair comparison, we adopt the conventional linear-probe protocol in which prior work fine-tunes only a $1{\times}1$ convolutional head for 10 epochs.  However, we found that this shallow schedule often under-trains the head.  We therefore extend training to 100 epochs and apply a cosine learning-rate schedule to gradually decay the head’s learning rate.  Under this setting, our results in Table~\ref{tab:seg_results} show a trend that differs from previous reports~\cite{fu2024featup,suri2024lift,huang2025loftuplearningcoordinatebasedfeature,couairon2025jafar,wimmer2025anyup}: although all methods converge quickly, their eventual gains over simple bilinear upsampling are modest when the backbone representation is strong. 
This raises the question of how much feature upsampling helps semantic segmentation under high-capacity backbones. 
Nevertheless, our proposed Upsample Anything attains the best accuracy across COCO, PASCAL-VOC, and ADE20K, with AnyUp consistently second.

\textit{Upsample Anything (prob.).} In addition to upsampling feature maps, we evaluate a low-compute variant that \emph{predicts the segmentation at the feature resolution} (no feature upsampling), produces a probabilistic map, and then upsamples \emph{probabilities} to the original image size using our method. 
Because the logits/probabilities live on a much smaller spatial grid, this pipeline achieves the lowest computational cost yet delivers the highest accuracy in Table~\ref{tab:seg_results}. 
This suggests a promising paradigm for segmentation: upsample task probabilities rather than intermediate features.

\begin{table}[t]
\centering
\resizebox{\linewidth}{!}{
\setlength{\tabcolsep}{5pt}
\renewcommand{\arraystretch}{1.2}
\begin{tabular}{l|cc|ccccc}
\toprule
\multirow{2}{*}{\textbf{Method}} 
& \multicolumn{2}{c|}{\textbf{Depth Estimation}} 
& \multicolumn{5}{c}{\textbf{Surface Normal Estimation}} \\ 
\cmidrule(lr){2-3} \cmidrule(lr){4-8}
 & \textbf{RMSE (↓)} & \textbf{$\delta_1$ (↑)} 
 & \textbf{Mean (↓)} & \textbf{Median (↓)} & \textbf{$<\!11.25^\circ$ (↑)} & \textbf{$<\!22.5^\circ$ (↑)} & \textbf{$<\!30^\circ$ (↑)} \\
\midrule
Bilinear & 0.545 & 0.804 & 23.8 & 19.5 & 33.0 & 70.0 & 81.0 \\
FeatUp   & 0.523 & 0.810 & 22.7 & 18.6 & 35.5 & 72.3 & 83.0 \\
LoftUp   & 0.796 & 0.789 & 28.9 & 24.1 & 25.0 & 58.0 & 71.0 \\
JAFAR    & 0.521 & 0.807 & 23.2 & 19.0 & 34.0 & 70.8 & 82.0 \\
AnyUp    & 0.513 & 0.817 & 22.2 & 18.1 & 36.8 & 73.5 & 84.1 \\
\rowcolor{gray!15}
\textbf{Upsample Anything (Ours)} & \textbf{0.498} & \textbf{0.829} & \textbf{21.5} & \textbf{17.4} & \textbf{38.1} & \textbf{75.2} & \textbf{85.8} \\
\bottomrule
\end{tabular}
}
\caption{Comparison of depth and surface normal estimation on the NYUv2 dataset.}
\label{tab:nyu_depth_normal}
\end{table}

\paragraph{Depth Estimation} We evaluate our method on the \textbf{NYUv2} dataset using a frozen DINOv2 backbone. 
Following prior works (AnyUp, LoftUp), we adopt a lightweight DPT-style decoder head for dense prediction (details in Appendix). 
Unlike the original DPT, our Upsample Anything removes the internal interpolation layers, as the feature maps are already upsampled to high resolution. 
As shown in Table~\ref{tab:nyu_depth_normal}, Upsample Anything achieves the best performance on both depth and surface normal estimation 
(RMSE~0.498, $\delta_1$~0.829, mean~21.5°), 
indicating that precise feature upsampling is particularly beneficial for geometry-oriented tasks, 
while LoftUp suffers from domain gaps and fails to generalize. 
It appears that feature upsampling plays a more critical role in depth and surface normal estimation than in semantic segmentation.

\paragraph{Depth Map Upsampling}
Unlike feature upsampling tasks, our Upsample Anything can also be applied to \emph{other modalities} such as raw depth maps. 
In Table~\ref{tab:depth_up_results}, we evaluate Upsample Anything by downsampling high-resolution depth maps to $32{\times}32$ and restoring them to $512{\times}512$ resolution.
This setup shares the same bilateral upsampling pipeline as our feature experiments, except that the low-resolution input is a depth map itself.
We compare against the state-of-the-art guided interpolation method (GLU)~\cite{song2023guided} and the bilinear baseline.
As shown in Figure~\ref{fig:depth}, Upsample Anything achieves the best performance on the Middlebury dataset, producing sharper and more consistent structures.
In contrast, on the NYUv2 dataset, the bilinear method yields a slightly lower RMSE (0.159 vs. 0.237), likely because the ground-truth depth maps are blurred and contain smoother structures.
Nevertheless, the qualitative results suggest that Upsample Anything preserves geometry more effectively, especially for high-frequency and edge-dominant regions.

\begin{table}[t]
\centering
\resizebox{1.0\linewidth}{!}{
\setlength{\tabcolsep}{5pt}
\renewcommand{\arraystretch}{1.2}
\begin{tabular}{l|cc|cc}
\toprule
\multirow{2}{*}{\textbf{Method}} 
& \multicolumn{2}{c|}{\textbf{NYUv2 (Depth Up.)}} 
& \multicolumn{2}{c}{\textbf{Middlebury (Depth Up.)}} \\ 
\cmidrule(lr){2-3} \cmidrule(lr){4-5}
& \textbf{RMSE (↓)} & \textbf{$\delta_1$ (↑)} 
& \textbf{RMSE (↓)} & \textbf{$\delta_1$ (↑)} \\
\midrule
Bilinear & 0.167 & 0.983 & 0.231 & 0.962 \\
GLU (Guided Linear Upsample) & 0.372 & 0.841 & 0.491 & 0.825 \\
\rowcolor{gray!15}\textbf{Upsample Anything (Ours)} & 0.214 & 0.976 & \textbf{0.209} & \textbf{0.967} \\
\bottomrule
\end{tabular}
}
\caption{Comparison on NYUv2 and Middlebury depth upsampling.}
\label{tab:depth_up_results}
\end{table}

\begin{figure}[t!]
    \centering
    \includegraphics[width=0.9\columnwidth]{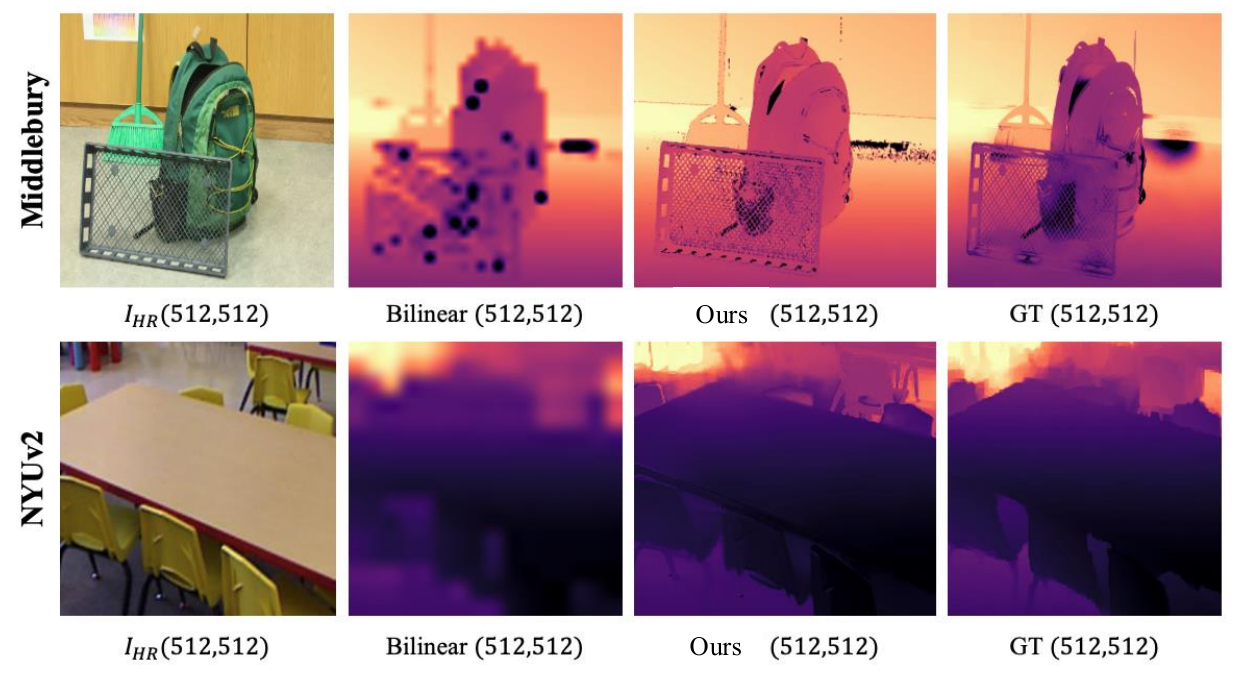}
    \caption{Depth upsampling results on Middlebury (top) and NYUv2 (bottom).  32×32 low-resolution depth maps were upsampled to high resolution using different methods. 
While Upsample Anything produces sharper and more detailed edges, it still achieves lower RMSE (0.237) than bilinear (0.159) on low-resolution maps. 
However, in high-resolution depth prediction, Upsample Anything outperforms both qualitatively and quantitatively.}
\label{fig:depth}
\end{figure}

\begin{figure*}[t!]
    \centering
    \includegraphics[width=1.7\columnwidth]{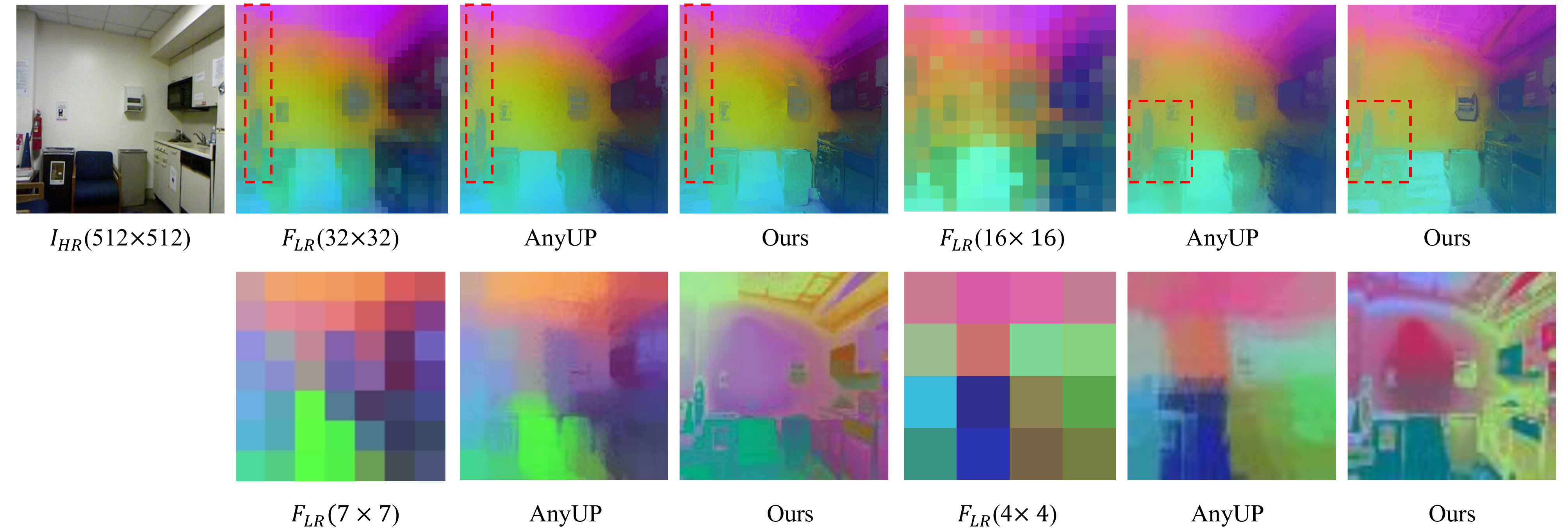}
    \caption{Comparison across different resolutions. Qualitative results of AnyUp (previous SOTA) and our Upsample Anything on varying input resolutions.}

\label{fig:different_resolution}
\end{figure*}

\subsection{Qualitative results}
\paragraph{Across Different Resolutions.}
Figure \ref{fig:different_resolution} compares AnyUp (previous SOTA) and our Upsample Anything across different input resolutions.
As shown, AnyUp performs reasonably well at higher resolutions (e.g., $32{\times}32$ and $16{\times}16$) but tends to produce over-smoothed regions, as highlighted by the red boxes.
In contrast, Upsample Anything maintains sharp boundaries and fine structures even at extremely low resolutions (e.g., $7{\times}7$ and $4{\times}4$), demonstrating stronger robustness to spatial degradation.
\paragraph{Across Different Backbones.}
We compare the visual quality of upsampled features produced by AnyUP and our Upsample Anything across various backbone architectures.
Given the 224×224 input image, the spatial resolutions of the extracted feature maps differ by model: 7×7 for ConvNeXt, 14×14 for CLIP and DINOv1, and 16×16 for DINOv2 and DINOv3.
In this example, $\hat{I}_{HR}$ denotes the reconstructed high-resolution image obtained from a 7×7 low-resolution $I_{LR}$ using Upsample Anything.
As shown in ~\cref{fig:different_backbone}, Upsample Anything consistently produces sharper boundaries, finer local structures, and more coherent feature clustering than AnyUP across all backbones.
We attribute this advantage to Upsample Anything’s test-time optimization, which adaptively fits Gaussian parameters to each input image, yielding features that align more precisely with image-level semantics.
\paragraph{Feature Similarity Analysis.}
~\cref{fig:cossim} visualizes the feature similarity between two different images that share the same object category.
This setting is often adopted in few-shot segmentation tasks to assess the consistency of feature representations.
As shown in the figure, both AnyUp and Upsample Anything produce visually plausible feature upsampling results; however, when measuring cosine similarity, AnyUp tends to yield uniformly high similarity across the entire image, indicating a lack of spatial discrimination.
In contrast, our Upsample Anything produces sharper and more localized similarity maps,
where object boundaries are clearly preserved and distinct regions are well separated.
We believe that such discriminative feature behavior suggests the potential of our method for downstream tasks such as few-shot segmentation and category-level feature matching.

\begin{figure*}[t!]
    \centering
    \includegraphics[width=1.8\columnwidth]{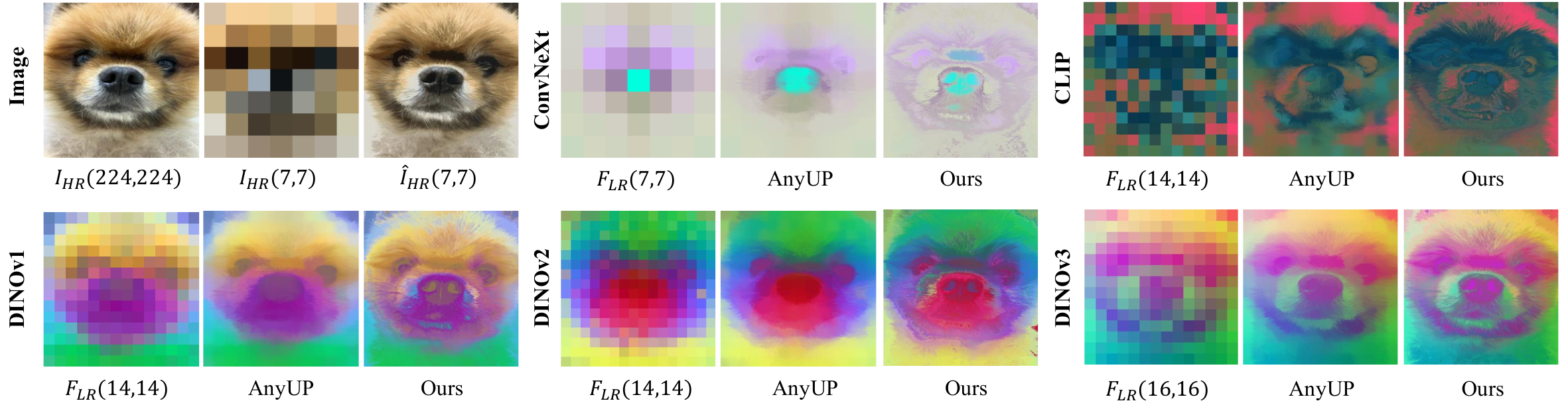}
    \caption{Visual comparison across different backbones. Given the same 224×224 input, feature maps have varying spatial resolutions (7×7 for ConvNeXt, 14×14 for CLIP and DINOv1, 16×16 for DINOv2/v3). Upsample Anything produces sharper edges, richer textures, and more distinct feature clustering than AnyUP across all backbones, demonstrating its strong adaptability through test-time optimization.}
\label{fig:different_backbone}
\end{figure*}

\begin{figure}[t!]
    \centering
    \includegraphics[width=1.0\columnwidth]{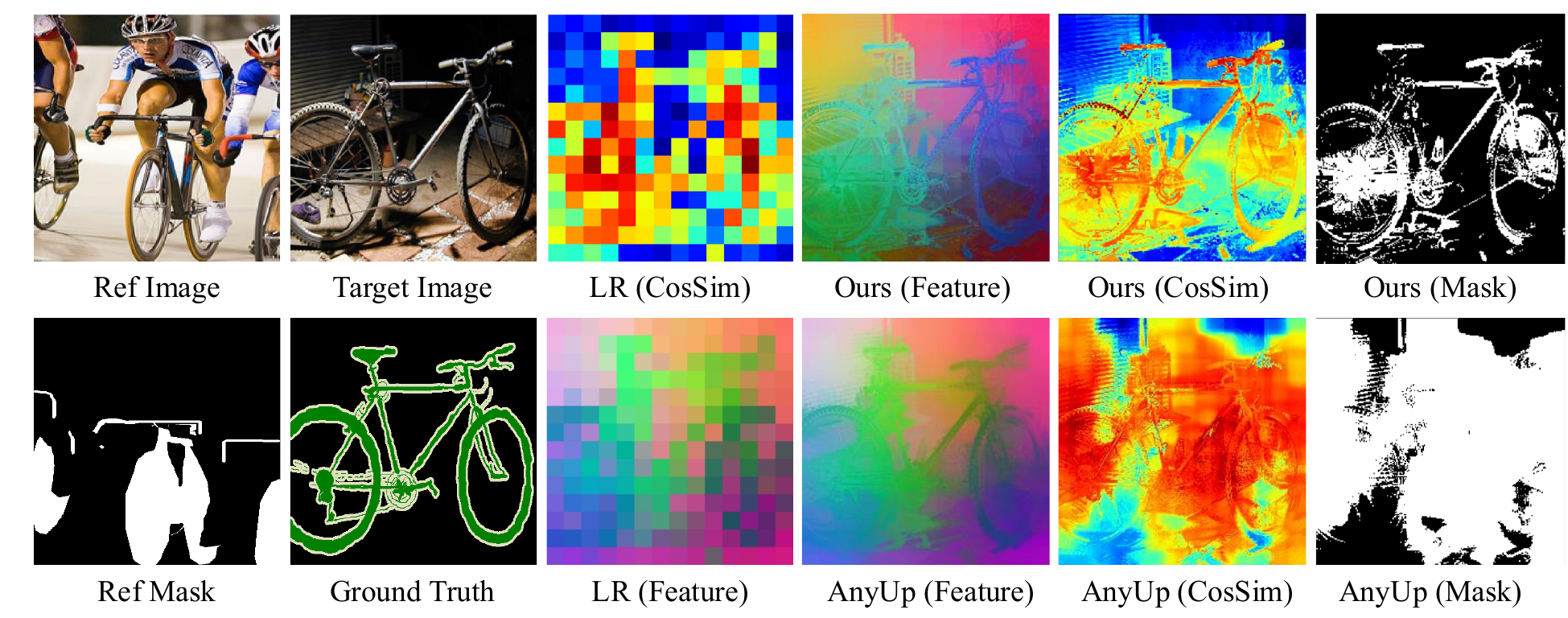}
    \caption{Visualization of feature similarity between a reference and a target image. The feature vector is obtained by averaging the reference features within the reference mask, and cosine similarity is then computed against all feature locations in the target image.}
\label{fig:cossim}
\end{figure}
\subsection{Ablation Study}
\paragraph{Resolution–Efficiency Tradeoff} We analyze the computational efficiency of our Upsample Anything compared with AnyUp~\cite{wimmer2025anyup} across multiple output resolutions, as summarized in Table~\ref{tab:anyup_gsjbu_efficiency}. 
AnyUp employs a feature-agnostic layer and local window attention that allow flexible inference across arbitrary encoders and resolutions. 
While this design yields fast performance at lower resolutions ($<\!256\times256$), the window-based attention and dense similarity computation introduce quadratic growth in both memory and time complexity as the spatial size increases. 
Consequently, AnyUp suffers from significant GPU memory overhead and fails with out-of-memory (OOM) errors beyond $512\times512$, exposing a scalability bottleneck in its dense attention formulation.

In contrast, our Upsample Anything performs fully parallel anisotropic Gaussian splatting without building dense pairwise affinity maps, resulting in linear memory growth with respect to the output size. 
As shown in Table~\ref{tab:anyup_gsjbu_efficiency}, Upsample Anything maintains stable runtime and controlled memory even at $1024\times1024$ resolution—where AnyUp cannot execute—demonstrating its robustness for large-scale feature upsampling. 

\begin{table}[t]
\centering
\resizebox{0.9\linewidth}{!}{
\setlength{\tabcolsep}{5pt}
\renewcommand{\arraystretch}{1.2}
\begin{tabular}{c|c|c}
\toprule
\textbf{Resolution (H×W)} & \textbf{AnyUp (trained) /}  & \textbf{AnyUp (trained) /} \\
 & \textbf{Ours (TTA) Time (s)} &\textbf{Ours (TTA) Peak Mem (MB)} \\
\midrule
64×64    & 0.0025 / 0.0055 & \textbf{53.8} / 358.5 \\
128×128  & 0.0034 / 0.0150 & \textbf{121.9} / 1320.5 \\
224×224  & 0.0137 / 0.0419 & \textbf{531.0} / 3969.7 \\
448×448  & 0.0583 / 0.2398 & \textbf{6184.2} / 15774.8 \\
512×512  & 0.0893 / 0.3211 & 10283.8 / \textbf{20590.4} \\
896×896  & 0.5875 / 1.2789 & 91250.9 / \textbf{62985.2} \\
1024×1024 & OOM / 1.8083   & — / \textbf{82255.5} \\
2048×2048 & OOM / OOM      & — / — \\
\bottomrule
\end{tabular}
}
\caption{omparison of inference time and GPU memory usage between AnyUp and \textbf{Upsample Anything} under varying input resolutions.  Both methods operate without training, but \textbf{Upsample Anything} performs per-image test-time optimization (TTO), which results in slightly higher computational cost but significantly improved generalization.}
\label{tab:anyup_gsjbu_efficiency}
\end{table}

\paragraph{Why Upsample Anything: Design Motivation and Comparative Analysis} In our architecture search, we aimed to achieve accurate upsampling within one second while maintaining generalization across domains.
~\cref{tab:time_seg_depth} compares different upsampling strategies on PASCAL-VOC and NYUv2 benchmarks. Guided Linear Upsampling (GLU)~\cite{song2023guided} represents a strong bilateral-filter-based baseline but exhibits unstable optimization due to its unconstrained formulation.
We also implemented a 2D Gaussian Splatting (2DGS)~\cite{huang20242d} baseline by directly optimizing Gaussian kernels for low-to-high feature interpolation without hierarchical fitting.
Although 2DGS can model local structures continuously, its dense Gaussian representation and lack of spatial–range constraints make it computationally heavy and prone to over-smoothing during test-time optimization.
In contrast, Upsample Anything inherits the spatial–range constraint of classical JBU while leveraging the continuous Gaussian formulation for differentiable optimization.
As shown in ~\cref{tab:time_seg_depth} Upsample Anything achieves the best balance between speed, convergence, and accuracy, demonstrating that the JBU constraint synergizes effectively with Gaussian optimization for fast and stable test-time refinement.

\begin{table}[t]
\centering
\resizebox{\linewidth}{!}{
\setlength{\tabcolsep}{6pt}
\renewcommand{\arraystretch}{1.2}
\begin{tabular}{l|c|cc|cc}
\toprule
\multirow{2}{*}{\textbf{Method}} 
& \multirow{2}{*}{\textbf{Time (s)}} 
& \multicolumn{2}{c|}{\textbf{PASCAL-VOC (Segmentation)}} 
& \multicolumn{2}{c}{\textbf{NYUv2 (Depth Estimation)}} \\ 
\cmidrule(lr){3-4} \cmidrule(lr){5-6}
 & & \textbf{mIoU (↑)} & \textbf{Acc. (↑)} & \textbf{RMSE (↓)} & \textbf{$\delta_1$ (↑)} \\
\midrule
Bilinear & 0.00009 & 81.27 & 95.96 & 0.545 & 0.804 \\
Guided Linear Upsample & 0.00303 & 80.12 & 94.73 & 0.598 & 0.773  \\
JBU  & 0.00600 &81.65 & 96.21 & 0.531 & 0.812 \\
LIG & 481.526 &78.54 & 93.02 & 0.642 & 0.741 \\
\rowcolor{gray!15}
\textbf{GSJBU (Ours)} & 0.4197 & \textbf{82.22} & \textbf{96.90} & \textbf{0.498} & \textbf{0.829} \\
\bottomrule
\end{tabular}
}
\caption{
Comparison of inference time, segmentation, and depth estimation performance across different upsampling methods. 
GSJBU achieves the best balance between accuracy and scalability.
}
\label{tab:time_seg_depth}
\end{table}

\paragraph{Impact of Test-Time Optimization Steps}
We conducted experiments to investigate the trade-off between the number of TTO iterations, inference time, and performance.
Table~\ref{tab:iteration_ablation} summarizes the results.
We measured PSNR between $I_{lr}$ and $\hat{I}_{hr}$, along with downstream segmentation performance on PASCAL-VOC.
As shown, PSNR quickly converges to 35.60 after about 500 iterations, indicating that Upsample Anything reaches its optimum very early.
Interestingly, the best segmentation accuracy is achieved at only 50 iterations, which also provides the fastest inference time (0.419s).
Based on this observation, we adopt 50 iterations as the default setting throughout all experiments.

\begin{table}[t]
\centering
\resizebox{0.9\linewidth}{!}{
\setlength{\tabcolsep}{6pt}
\renewcommand{\arraystretch}{1.2}
\begin{tabular}{c|c|cc|c}
\toprule
\multirow{2}{*}{\textbf{Iteration}} 
& \multirow{2}{*}{\textbf{PSNR (↑)}} 
& \multicolumn{2}{c|}{\textbf{PASCAL-VOC (Segmentation)}} 
& \multirow{2}{*}{\textbf{Time (s)}} \\ 
\cmidrule(lr){3-4}
 & & \textbf{mIoU (↑)} & \textbf{Acc. (↑)} & \\
\midrule
\rowcolor{gray!15}50    & 35.33 & \textbf{82.22} & \textbf{96.90} & \textbf{0.041} \\
300   & 35.59 & 82.10 & 96.84 & 3.397 \\
500   & \textbf{35.60} & 82.15 & 96.88 & 6.161 \\
1000  & \textbf{35.60} & 82.10 & 96.82 & 12.294 \\
5000  & \textbf{35.60} & 82.17 & 96.80 & 61.458 \\
\bottomrule
\end{tabular}
}
\caption{
Ablation on the number of optimization iterations. }
\label{tab:iteration_ablation}
\end{table}

%
%
%
%
%
\section{Conclusion}
We introduced Upsample Anything, a unified framework that connects Joint Bilateral Upsampling (JBU) and Gaussian Splatting (GS) under a continuous formulation. 
It performs lightweight test-time optimization without pre-training or architectural constraints, achieving efficient and robust upsampling across diverse resolutions and domains. 
It optimizes a 224×224 image in 0.419 seconds while producing significant gains in both feature and depth upsampling. 
Extensive experiments show that Upsample Anything achieves state-of-the-art performance without any learnable module, serving as a universal, plug-and-play framework that combines the simplicity of JBU with the expressive power of Gaussian representation. 
\noindent\textbf{Limitation.} 
Despite its generality, \textit{Upsample Anything} may face challenges under severe occlusions or low-SNR guidance, where optimization becomes unstable. 
Future work will focus on enhancing the robustness and adaptability of the framework across diverse domains, aiming to make it more resilient under challenging conditions.

\clearpage
\setcounter{page}{1}
\maketitlesupplementary

\section{Semantic Segmentation on Cityscapes}
We evaluated feature upsampling and probability-map upsampling on Cityscapes using the official LoftUp segmentation codebase with a stronger training setup that includes 448×448 input resolution, 100 epochs, and a learning-rate scheduler.
Under this configuration, all methods, including LoftUp and ours, produced almost the same mIoU as bilinear interpolation, which differs from the improvements reported in the LoftUp paper.

To ensure correctness, we carefully re-examined our implementation through an automated code audit with ChatGPT and a manual review by multiple authors. We found no inconsistencies or bugs.

The quantitative results are summarized in Table~\ref{tab:cityscapes_results}. 
Across all methods, including feature-level and probability-level upsampling, the differences remain within a very narrow range. 
Cityscapes primarily contains large and regular structures, and its annotations are relatively coarse. 
With a sufficiently trained segmentation head, bilinear interpolation already performs near optimally, leaving little room for additional gains. 
In contrast, datasets such as COCO, PASCAL-VOC, and ADE20K include many small objects and complex boundaries, where upsampling delivers clear benefits.

\begin{table}[h]
\centering
\resizebox{\linewidth}{!}{%
\setlength{\tabcolsep}{6pt}
\renewcommand{\arraystretch}{1.15}
\begin{tabular}{l|cc}
\hline
\multirow{2}{*}{Method} &
\multicolumn{2}{c}{\textbf{Cityscapes}} \\
 & mIoU ($\uparrow$) & Acc. ($\uparrow$) \\
\hline
Bilinear & 57.90 & 90.59 \\
FeatUp   & 57.92 & 90.61 \\
LoftUp   & 57.89 & 90.60 \\
JAFAR    & 57.91 & 90.58 \\
AnyUp    & 57.93 & 90.62 \\
Upsample Anything & 57.92 & 90.63 \\
\hline
Upsample Anything (prob.) & 56.36 & 90.05 \\
\hline
\end{tabular}
} 
\caption{Segmentation performance on the Cityscapes dataset using the official LoftUp evaluation pipeline.}
\label{tab:cityscapes_results}
\end{table}

\section{Details of Probabilistic Map Upsampling in Table 1.}
Figure~\ref{fig:segseg}-(c) corresponds to the \textit{Upsample Anything (prob.)} configuration reported in Table~1. 
In this setting, the segmentation map is predicted from downsampled features using a lightweight 1×1 convolution, followed by our probabilistic upsampling to reconstruct high-resolution outputs.
This simple setup already shows strong performance.
Because the computation is performed on a small feature map, heavier or more complex decoders are expected to be feasible without large computational overhead.
This suggests that the proposed segmentation pipeline has further potential, although designing such decoders is beyond the scope of this work.

\begin{figure}[t!]
    \centering
    \includegraphics[width=1.0\columnwidth]{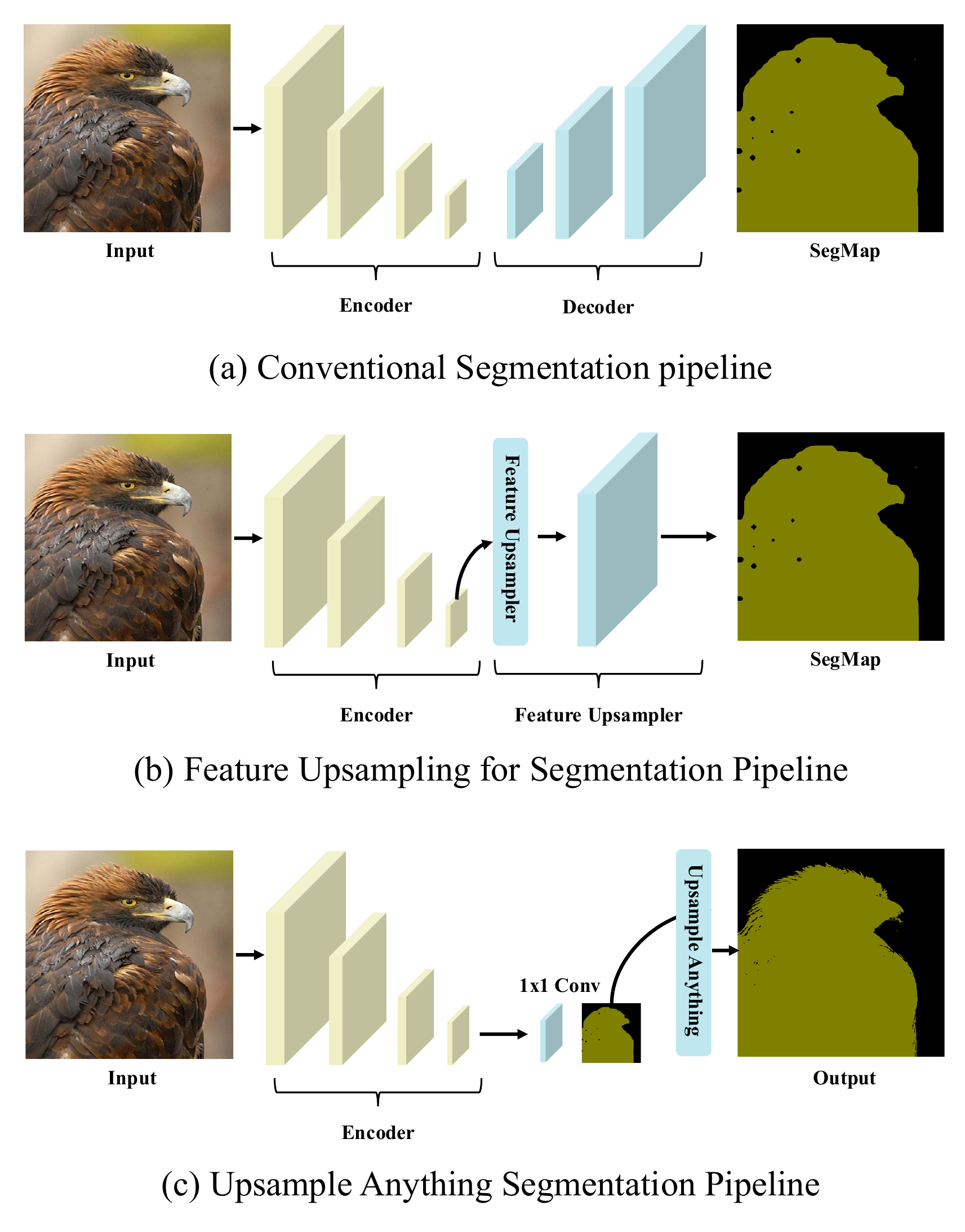}
    \caption{Comparison of segmentation pipelines. (a) Conventional segmentation pipeline with a Vision Foundation Model encoder and task-specific decoders such as DPT, UPerNet, SegFormer, or Mask2Former. (b) Feature upsampling pipeline using pretrained upsamplers such as FeatUP, LoftUP, JAFAR, or AnyUp, operating on feature maps. (c) Our proposed Upsample Anything, which performs test-time optimization and handles both feature and segmentation upsampling without additional training.}
\label{fig:segseg}
\end{figure}

\section{Details of Depth Estimation in Table 2.}
We followed the DPT-based depth estimation setup used in prior works~\cite{huang2025loftuplearningcoordinatebasedfeature, wimmer2025anyup}. 
Although DPT includes an internal upsampling, its exact implementation details are not provided in the paper or codebase. 
Therefore, we reimplemented the head as described in Algorithm~\ref{alg:dptdepth} and used it consistently for all our depth estimation experiments.

\begin{algorithm}[t!]
\caption{Depth estimation setting used in Table~2.}
\label{alg:dptdepth}
\begin{algorithmic}[1]
\REQUIRE Input image $x$
\ENSURE Predicted depth map $D$
\STATE Extract multi-scale features $F$ using a pretrained Vision Foundation Model encoder (e.g., DINOv2)
\STATE Pass $F$ through the DPT head:
\STATE \hspace{1em}$F_1 \leftarrow \mathrm{Conv}(F,\, k{=}3,\, c{\rightarrow}c/2)$
\STATE \hspace{1em}$F_2 \leftarrow \mathrm{Conv}(F_1,\, k{=}3,\, c/2{\rightarrow}32)$
\STATE \hspace{1em}$F_3 \leftarrow \mathrm{ReLU}(F_2)$
\STATE \hspace{1em}$D_{inv} \leftarrow \mathrm{Conv}(F_3,\, k{=}1,\, 32{\rightarrow}1)$
\STATE \hspace{1em}$D_{inv} \leftarrow \mathrm{ReLU}(D_{inv})$ (if non-negative)
\IF{\texttt{invert} is True}
    \STATE $D \leftarrow \frac{1}{\mathrm{clip}(s \cdot D_{inv} + t,\, 1\mathrm{e}{-8},\, \infty)}$
\ELSE
    \STATE $D \leftarrow D_{inv}$
\ENDIF
\STATE Output the final depth prediction $D$
\end{algorithmic}
\end{algorithm}

\section{From 2D Low-Resolution Feature Maps to 3D High-Resolution Feature Volumes}
\label{sec:3d_upsampling}

\begin{figure*}[t!]
    \centering
    \includegraphics[width=1.8\columnwidth]{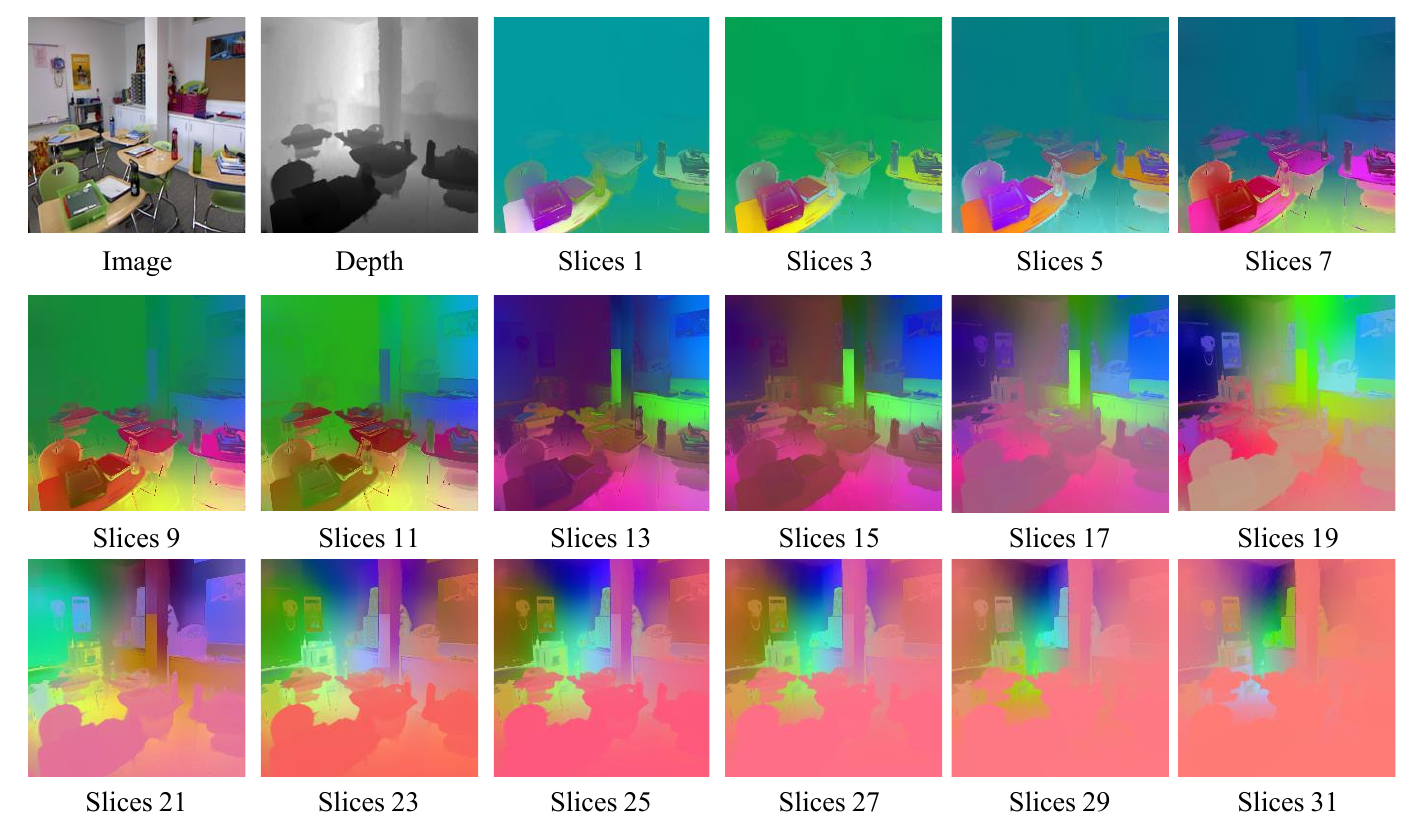}
    \caption{Visualization of our 3D feature upsampling results. The first two columns show the RGB image and its corresponding depth map. The remaining panels depict representative depth slices from the reconstructed 3D high-resolution feature volume obtained using our Upsample Anything. Each slice is visualized via PCA projection into RGB space. Notice that the recovered 3D feature layers exhibit smooth transitions along the depth axis while preserving fine object boundaries and geometric continuity.}
\label{fig:feat3d_grid}
\end{figure*}

We extend our test-time optimization (TTO) framework to reconstruct dense 3D feature volumes directly from low-resolution 2D feature maps.
Starting with an RGB–Depth (RGB-D) pair, we first downsample the RGB image by a factor of $s$ to simulate a low-resolution feature space.
The corresponding high-resolution RGB-D map is used as the guide signal for optimization. 
During TTO stage, we train only the pixel-wise anisotropic Gaussian kernel parameters $(\sigma_x, \sigma_y, \sigma_z, \theta, \sigma_r)$ so that the 3D Upsample Anyting can accurately project the low-resolution RGB features to their high-resolution RGB-D counterparts.
Once optimized, these learned kernels are frozen and reused in upsample stage to upsample semantic features extracted from 2D LR feature into full 3D feature volumes.
This process allows each low-resolution feature token to be expanded not only spatially along the $x$–$y$ plane, but also along the depth axis $z$, guided by the HR depth map.
The resulting tensor $\mathbf{F}_{3D} \in \mathbb{R}^{D_h \times C \times H_h \times W_h}$ captures the local geometric and appearance-aware structure of the scene.
We visualize these 3D feature maps using PCA on each depth slice, revealing how distinct depth layers retain meaningful semantic separation while smoothly transitioning across depth.
Figure~\ref{fig:feat3d_grid} shows that even without explicit 3D supervision, our Full3DJBU reconstructs volumetric features that align with depth continuity, edges, and object boundaries—demonstrating that our framework can generalize from 2D low-resolution feature inputs to 3D high-resolution representations at test time.

\begin{figure}[t!]
    \centering
    \includegraphics[width=1.0\columnwidth]{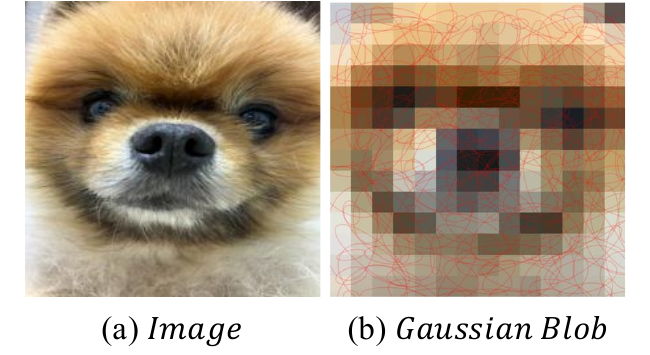}
    \caption{
    Visualization of learned Gaussian blobs.
    (a) shows the original image and (b) displays the Gaussian blobs overlaid on the low-resolution input.
    The blobs reveal locally coherent directions and magnitudes, indicating that the learned kernels adapt to the underlying structure of the scene.
    }
\label{fig:gs_vis}
\end{figure}

\begin{figure*}[t!]
    \centering
    \includegraphics[width=2.0\columnwidth]{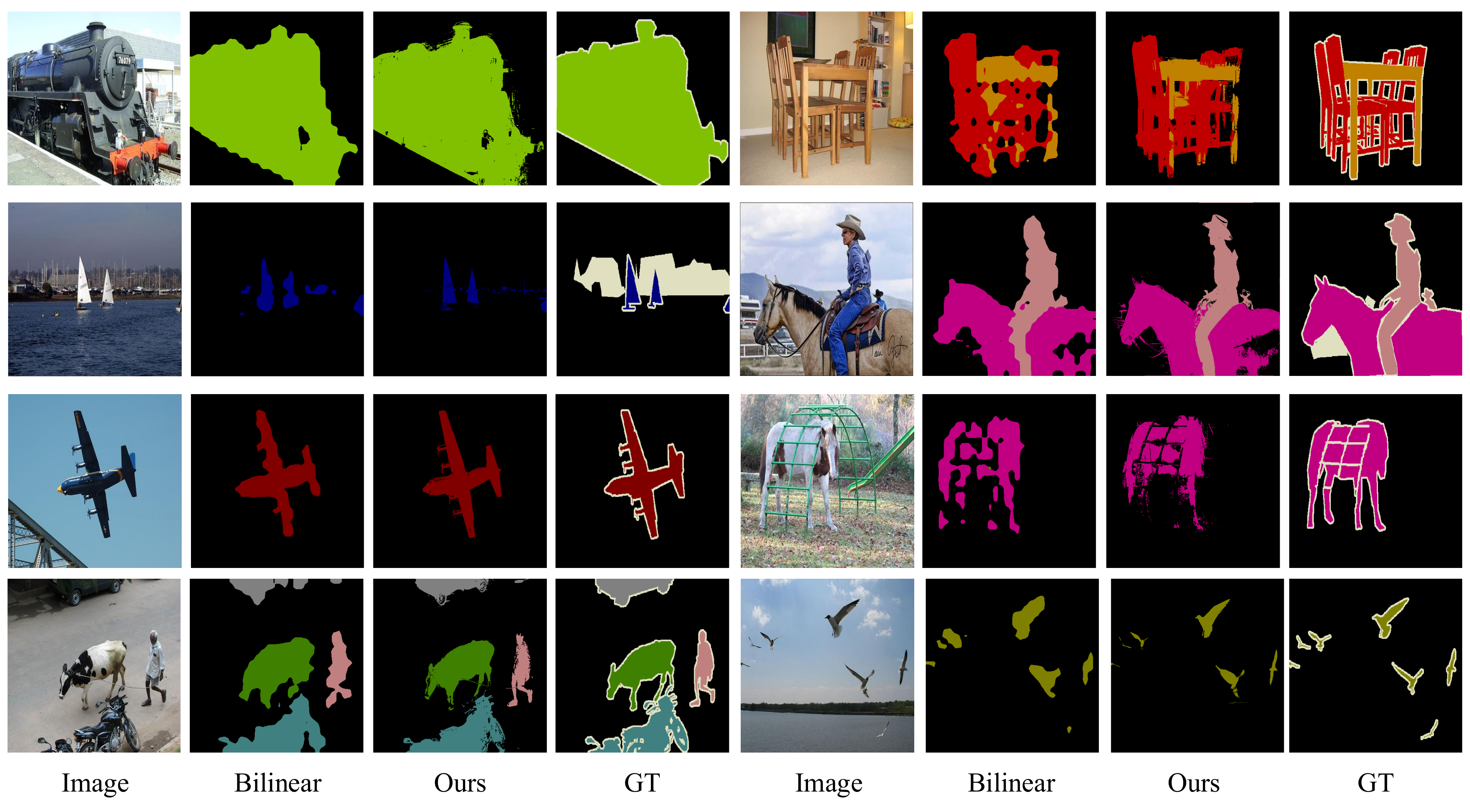}
    \caption{
    Visualization of the \textit{segment-then-upsample} pipeline.
    The segmentation logits are first generated at low resolution and then upsampled by $16\times$ using our method.
    The results exhibit remarkably sharp object boundaries and preserve semantic coherence, highlighting the effectiveness of our upsampling approach.
    }
\label{fig:seg_upsample_vis}
\end{figure*}

\section{Gaussian Blob Visualization}
To better understand what our learned anisotropic kernels capture, we visualized the Gaussian blobs of our model, as shown in Fig.~\ref{fig:gs_vis}.
(a) shows the original high-resolution RGB image, and (b) overlays the learned Gaussian blobs on the corresponding low-resolution image.
Although the visualization can be difficult to interpret directly, certain spatial regions such as the eyes, nose, and corners exhibit overlapping or consistently oriented blobs, which suggests that nearby kernels capture semantically similar local structures.
This indicates that the learned kernels adaptively encode meaningful directional features rather than behaving randomly.
However, similar to other methods that rely on Gaussian Splatting or 2DGS, not every blob is fully interpretable, and some visual noise appears due to overparameterization and kernel redundancy.

\section{Segment-then-Upsample Pipeline Visualization Results}
This section presents the visualization results of our \textit{segment-then-upsample} pipeline, corresponding to the method in Fig.~\ref{fig:segseg}-(c). 
In this configuration, we perform semantic segmentation on the low-resolution feature maps first and subsequently upsample the segmentation logits by a factor of $16\times$. 
Despite the large upsampling ratio, our Upsample Anything produces visually sharp and semantically consistent results, as illustrated in Fig.~\ref{fig:seg_upsample_vis}. 
Compared to conventional bilinear interpolation, the recovered boundaries and fine structures are significantly clearer.

\section{Formal Relation Between Joint Bilateral Upsampling and Gaussian Splatting}
\label{sec:jbu_gs_formulation}
The purpose of this section is not to claim that Joint Bilateral Upsampling (JBU) and Gaussian Splatting (GS) are mathematically equivalent.
Instead, we aim to show why the GS framework provides a useful foundation for our formulation.
By reinterpreting JBU through the perspective of GS, we reveal a common idea based on continuous and differentiable Gaussian kernels, which motivates our use of GS-style parameter learning in the \textbf{Upsample Anything (GSJBU)} framework.
In short, this section clarifies the conceptual link between the two views and explains why GS-based test-time optimization naturally applies to feature upsampling.

\paragraph{Notation.}
Let $F_{\mathrm{lr}}:\mathcal{Q}\!\to\!\mathbb{R}^{C}$ be a low-resolution feature map on a discrete grid $\mathcal{Q}\subset\mathbb{Z}^2$,
and let $I:\Omega\!\to\!\mathbb{R}^{d}$ be an HR guidance signal ($d{=}1$ for grayscale, $d{=}3$ for RGB, etc.).
For $p\in\Omega\subset\mathbb{R}^2$, classical JBU is
\begin{equation}
\small
\hat{F}_{\mathrm{hr}}(p)=
\frac{
\sum\limits_{q\in\Omega(p)}
F_{\mathrm{lr}}(q)\,
\exp\!\big(-\tfrac{\|p-q\|^2}{2\sigma_s^2}\big)\,
\exp\!\big(-\tfrac{\|I(p)-I(q)\|^2}{2\sigma_r^2}\big)}
{\sum\limits_{q\in\Omega(p)}
\exp\!\big(-\tfrac{\|p-q\|^2}{2\sigma_s^2}\big)\,
\exp\!\big(-\tfrac{\|I(p)-I(q)\|^2}{2\sigma_r^2}\big)}.
\label{eq:jbu_appen}
\end{equation}

\vspace{2pt}
\noindent\textbf{Joint spatial–range lifting.}
Define the lifted embedding
\[
\phi:\Omega\to\mathbb{R}^{2+d},\qquad
\phi(x) := \begin{bmatrix}x\\ I(x)\end{bmatrix},
\]
and the block-diagonal covariance
\[
\Lambda(\sigma_s,\sigma_r) :=
\mathrm{diag}\!\big(\sigma_s^2 I_2,\ \sigma_r^2 I_d\big)\in\mathbb{R}^{(2+d)\times(2+d)}.
\]
For $u,v\in\mathbb{R}^{2+d}$, let
\[
\mathcal{G}_{\Lambda}(u,v):=
\exp\!\Big(-\tfrac{1}{2}\,(u-v)^{\!\top}\Lambda^{-1}(u-v)\Big).
\]

\begin{theorem}[JBU as a normalized Gaussian mixture in the joint domain]
\label{thm:jbu_as_gs_joint}
Fix $\sigma_s{>}0,\ \sigma_r{>}0$ and let $\Lambda=\Lambda(\sigma_s,\sigma_r)$.
Then for any $p\in\Omega$,
\begin{equation}
\small
\hat{F}_{\mathrm{hr}}(p)=
\frac{\sum\limits_{q\in\Omega(p)} F_{\mathrm{lr}}(q)\,
\mathcal{G}_{\Lambda}\!\big(\phi(p),\,\phi(q)\big)}
{\sum\limits_{q\in\Omega(p)} \mathcal{G}_{\Lambda}\!\big(\phi(p),\,\phi(q)\big)}.
\label{eq:jbu_joint_form}
\end{equation}
In particular, JBU coincides with evaluating a \emph{normalized} Gaussian mixture in the lifted space $\mathbb{R}^{2+d}$ whose centers are $\{\phi(q)\}_{q\in\Omega(p)}$ and whose (isotropic-by-block) covariance is $\Lambda$.
\end{theorem}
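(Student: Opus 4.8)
The plan is a direct algebraic verification: I would show that each Gaussian kernel $\mathcal{G}_{\Lambda}\big(\phi(p),\phi(q)\big)$ in the lifted space factorizes \emph{exactly} into the product of the spatial and range weights appearing in the classical JBU formula~\eqref{eq:jbu_appen}, after which the two normalized expressions coincide summand by summand. Since both \eqref{eq:jbu_appen} and \eqref{eq:jbu_joint_form} sum over the identical neighborhood $\Omega(p)$, matching the kernels pointwise suffices to equate the ratios.

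First I would form the displacement vector in the joint domain, $\phi(p)-\phi(q)=\big[(p-q)^{\top},\,(I(p)-I(q))^{\top}\big]^{\top}\in\mathbb{R}^{2+d}$. The essential observation is that $\Lambda=\mathrm{diag}(\sigma_s^2 I_2,\,\sigma_r^2 I_d)$ is block-diagonal, so its inverse is simply $\Lambda^{-1}=\mathrm{diag}(\sigma_s^{-2} I_2,\,\sigma_r^{-2} I_d)$. Because the off-diagonal blocks vanish, the quadratic form separates additively,
\[
(\phi(p)-\phi(q))^{\top}\Lambda^{-1}(\phi(p)-\phi(q))=\frac{\|p-q\|^2}{\sigma_s^2}+\frac{\|I(p)-I(q)\|^2}{\sigma_r^2},
\]
with no cross term linking the spatial and range coordinates. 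Exponentiating and using $\exp(a+b)=\exp(a)\exp(b)$ then yields the factorization
\[
\mathcal{G}_{\Lambda}\big(\phi(p),\phi(q)\big)=\exp\!\Big(-\tfrac{\|p-q\|^2}{2\sigma_s^2}\Big)\,\exp\!\Big(-\tfrac{\|I(p)-I(q)\|^2}{2\sigma_r^2}\Big),
\]
which is precisely the weight multiplying $F_{\mathrm{lr}}(q)$ in both the numerator and denominator of~\eqref{eq:jbu_appen}. Substituting this identity into \eqref{eq:jbu_joint_form} recovers \eqref{eq:jbu_appen} verbatim, and since the mixture centers are exactly $\{\phi(q)\}_{q\in\Omega(p)}$ with shared covariance $\Lambda$, the reading as a normalized Gaussian mixture in $\mathbb{R}^{2+d}$ is immediate.

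Because the argument is nothing more than separation of variables under a block-diagonal metric, there is no genuine analytical obstacle; the only point requiring care is confirming that it is exactly the block-diagonal structure of $\Lambda$ that forces the cross terms to vanish and hence permits the product factorization. I would flag explicitly that if one instead allowed a \emph{full} (non-block-diagonal) covariance, the spatial and range kernels would no longer decouple and the equivalence with standard JBU would fail. This is precisely the additional degree of freedom that Upsample Anything later exploits: by assigning anisotropic, per-pixel covariances it departs from the separable regime of Theorem~\ref{thm:jbu_as_gs_joint}, which is what motivates the GS-style parameterization rather than a claim of strict equivalence.
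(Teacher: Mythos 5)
Your proposal is correct and follows essentially the same route as the paper's proof: both exploit the block-diagonal structure of $\Lambda$ to split the lifted quadratic form into spatial and range parts, factor the exponential, and substitute back into the JBU formula. The extra remark about non-block-diagonal covariances breaking the factorization is a nice observation but does not change the argument.
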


\begin{proof}
\small
By construction,
\(
\|\phi(p)-\phi(q)\|_{\Lambda^{-1}}^2
=
(p-q)^{\!\top}(\sigma_s^{-2}I_2)(p-q)+
(I(p)-I(q))^{\!\top}(\sigma_r^{-2}I_d)(I(p)-I(q)).
\)
Thus
\(
\mathcal{G}_{\Lambda}(\phi(p),\phi(q))
=
\exp\!\big(-\tfrac{\|p-q\|^2}{2\sigma_s^2}\big)\,
\exp\!\big(-\tfrac{\|I(p)-I(q)\|^2}{2\sigma_r^2}\big),
\)
and substituting this identity in \eqref{eq:jbu} yields \eqref{eq:jbu_joint_form}.
\end{proof}

\begin{corollary}[Discrete GS view in the joint domain]
\label{cor:jbu_is_discrete_gs}
Let $\mu_q:= \phi(q)$ and $f_q:= F_{\mathrm{lr}}(q)$.
Then Theorem~\ref{thm:jbu_as_gs_joint} states that JBU equals
\begin{equation}
\small
\hat{F}_{\mathrm{hr}}(p)=
\frac{\sum\limits_{q} f_q\, \exp\!\big(-\tfrac{1}{2}(\phi(p)-\mu_q)^{\!\top}\Lambda^{-1}(\phi(p)-\mu_q)\big)}
{\sum\limits_{q} \exp\!\big(-\tfrac{1}{2}(\phi(p)-\mu_q)^{\!\top}\Lambda^{-1}(\phi(p)-\mu_q)\big)},
\end{equation}
i.e., a \emph{Gaussian Splatting} evaluation in $\mathbb{R}^{2+d}$ with fixed block-diagonal covariance and centers on the lifted LR grid.
\end{corollary}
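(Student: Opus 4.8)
The plan is to reduce the claim to a single algebraic identity: that the Mahalanobis exponent induced by the block-diagonal covariance $\Lambda$ factors \emph{exactly} into the product of the spatial and range Gaussians that define the classical JBU weight in \eqref{eq:jbu_appen}. Because both sides of the asserted equality share the same normalized structure---a ratio of feature-weighted kernel sums over the identical neighborhood $\Omega(p)$---it is enough to prove the pointwise identity $\mathcal{G}_{\Lambda}(\phi(p),\phi(q)) = \exp(-\|p-q\|^2/2\sigma_s^2)\,\exp(-\|I(p)-I(q)\|^2/2\sigma_r^2)$ and then substitute it into numerator and denominator simultaneously.

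First I would use the block-diagonal form of $\Lambda = \mathrm{diag}(\sigma_s^2 I_2,\ \sigma_r^2 I_d)$, whose inverse is $\Lambda^{-1} = \mathrm{diag}(\sigma_s^{-2} I_2,\ \sigma_r^{-2} I_d)$. Writing the lifted difference as $\phi(p)-\phi(q) = \big[\,(p-q)^{\top},\ (I(p)-I(q))^{\top}\,\big]^{\top}$, the quadratic form splits additively across the two blocks:
\[
(\phi(p)-\phi(q))^{\top}\Lambda^{-1}(\phi(p)-\phi(q))
= \frac{\|p-q\|^2}{\sigma_s^2} + \frac{\|I(p)-I(q)\|^2}{\sigma_r^2}.
\]
Next I would multiply by $-\tfrac{1}{2}$ and exponentiate; since the exponent is a sum, the exponential factors into the product of the two JBU kernels, which is precisely $\mathcal{G}_{\Lambda}(\phi(p),\phi(q))$. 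Substituting this identity into \eqref{eq:jbu_appen} then yields the compact joint form \eqref{eq:jbu_joint_form} directly.

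Honestly, I do not expect a genuine obstacle here: the entire content is the block-diagonal factorization of a Gaussian, which holds by construction of $\Lambda$ and $\phi$. The only place requiring care is bookkeeping---confirming that the same neighborhood $\Omega(p)$ and the same normalization factor appear on both sides so that no stray constant is introduced---but this is automatic because the kernel identity is applied identically to numerator and denominator, leaving the ratio unchanged. Once the theorem is established, the corollary follows immediately by relabeling $\mu_q := \phi(q)$ and $f_q := F_{\mathrm{lr}}(q)$, reinterpreting the same expression as a Gaussian Splatting evaluation in $\mathbb{R}^{2+d}$ with fixed block-diagonal covariance.
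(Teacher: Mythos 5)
Your proposal is correct and follows essentially the same route as the paper: the paper's proof of Theorem~\ref{thm:jbu_as_gs_joint} is exactly the block-diagonal splitting of the lifted Mahalanobis form into spatial and range terms, and the corollary is then obtained by the same relabeling $\mu_q:=\phi(q)$, $f_q:=F_{\mathrm{lr}}(q)$ that you describe. No gaps.
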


\paragraph{Connection to standard 2D GS.}
Standard (2D) GS writes, for $p\in\mathbb{R}^2$,
\begin{equation}
\small
F(p)=
\frac{\sum\limits_i \alpha_i\,
\exp\!\big(-\tfrac{1}{2}(p-\tilde{\mu}_i)^{\!\top}\tilde{\Sigma}_i^{-1}(p-\tilde{\mu}_i)\big)\, \tilde{f}_i}
{\sum\limits_j \alpha_j\,
\exp\!\big(-\tfrac{1}{2}(p-\tilde{\mu}_j)^{\!\top}\tilde{\Sigma}_j^{-1}(p-\tilde{\mu}_j)\big)}.
\label{eq:gs2d}
\end{equation}
The range term in JBU can be \emph{absorbed} by lifting to the joint domain (Theorem~\ref{thm:jbu_as_gs_joint}), or, equivalently, by keeping the domain 2D and letting the amplitude be query-dependent,
\(
\alpha_i(p)
=\exp\!\big(-\tfrac{\|I(p)-I(\tilde{\mu}_i)\|^2}{2\sigma_r^2}\big).
\)
The former is strictly \emph{query-independent} and thus mathematically cleaner; the latter matches common GS implementations with view-dependent weights.

\begin{theorem}[Specialization of GSJBU to JBU (isotropic limit)]
\label{thm:iso_limit}
Consider the anisotropic per-center model:
\begin{equation}
\small
\begin{aligned}
F(p)
&=
\frac{
\sum\limits_{q} f_q\,
\exp\!\big(-\tfrac{1}{2}(p-q)^{\!\top}\Sigma_q^{-1}(p-q)\big)\,
\beta_q(p)
}{
\sum\limits_{q}
\exp\!\big(-\tfrac{1}{2}(p-q)^{\!\top}\Sigma_q^{-1}(p-q)\big)\,
\beta_q(p)
},
\\[-2pt]
\beta_q(p)
&:=
\exp\!\big(-\tfrac{\|I(p)-I(q)\|^2}{2\sigma_r^2(q)}\big).
\end{aligned}
\label{eq:gsjbu_general}
\end{equation}
If $\Sigma_q \to \sigma_s^2 I_2$ and $\sigma_r(q)\to\sigma_r$ for all $q$, then \eqref{eq:gsjbu_general} reduces exactly to JBU \eqref{eq:jbu}.
\end{theorem}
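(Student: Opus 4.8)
The plan is to prove the reduction by direct substitution of the two stated limits into the exponential factors of \eqref{eq:gsjbu_general}, followed by an elementary continuity argument that lets the limit pass through the finite sum and the quotient. Because the neighborhood is a finite spatial window, every sum here has finitely many terms, so pointwise convergence of each summand is all that is needed; no uniform-convergence machinery is required.

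First I would treat the spatial factor. Since matrix inversion is continuous on the positive-definite cone, $\Sigma_q \to \sigma_s^2 I_2$ gives $\Sigma_q^{-1} \to \sigma_s^{-2} I_2$, and hence $(p-q)^{\top}\Sigma_q^{-1}(p-q) \to \sigma_s^{-2}\|p-q\|^2$. The anisotropic spatial weight therefore converges to the isotropic Gaussian $\exp(-\|p-q\|^2/2\sigma_s^2)$ that appears in classical JBU \eqref{eq:jbu}.

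Second I would treat the range factor. The limit $\sigma_r(q)\to\sigma_r$ makes $\beta_q(p)=\exp(-\|I(p)-I(q)\|^2/2\sigma_r^2(q))$ converge immediately to $\exp(-\|I(p)-I(q)\|^2/2\sigma_r^2)$, which is exactly the range kernel of JBU. Multiplying the two limits reconstructs the full JBU weight in both the numerator and the denominator of \eqref{eq:gsjbu_general}.

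The one point deserving care is the interchange of limit and quotient. I would observe that the denominator is a finite sum of strictly positive exponentials, so it is bounded away from zero uniformly along the limiting path; continuity of the map $(a,b)\mapsto a/b$ for $b\neq 0$ then yields convergence of $F(p)$ to the JBU expression \eqref{eq:jbu}, completing the proof. No genuine obstacle arises here—the content is pure substitution plus elementary continuity—so I would keep the exposition brief rather than dwell on the routine algebra.
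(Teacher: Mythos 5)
Your proposal is correct and takes essentially the same route as the paper, whose proof is a one-line direct substitution of $\Sigma_q=\sigma_s^2 I_2$ and $\sigma_r(q)=\sigma_r$ into the numerator and denominator. The only difference is that you additionally justify passing the limit through the finite sum and the quotient (denominator bounded away from zero), which the paper omits by reading the arrows as plain substitution; this is a harmless refinement, not a different argument.
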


\begin{table*}[t!]
\centering
\resizebox{\linewidth}{!}{
\begin{tabular}{l|c|c|l|l}
\toprule
\textbf{Parameter} & \textbf{Symbol} & \textbf{Default} & \textbf{Role} & \textbf{Rationale} \\
\midrule
Spatial sigma (x) 
    & $\sigma_x$ 
    & $\text{init}=\text{scale}$ (e.g., 16) 
    & Controls major-axis smoothing; receptive-field size 
    & Initialized proportional to upsampling factor to provide a wide prior; refined by TTO. \\
Spatial sigma (y) 
    & $\sigma_y$ 
    & Same as $\sigma_x$ 
    & Controls minor-axis smoothing 
    & Same reasoning as $\sigma_x$; enables anisotropy to emerge during TTO. \\
Orientation 
    & $\theta$ 
    & $0$ 
    & Rotation of the anisotropic Gaussian 
    & Zero-init avoids directional bias; TTO discovers optimal orientation. \\
Range sigma 
    & $\sigma_r$ 
    & $0.12$ 
    & Sensitivity to appearance/color similarity 
    & Moderate color differences ($\Delta I \approx 0.2\sim0.3$) are significantly downweighted; acts as soft bilateral prior. \\
Support radius (max) 
    & $R_{\max}$ 
    & $4\text{--}8$ 
    & Upper bound on spatial Gaussian support 
    & Balances context capture and cost ($\mathcal{O}((2R_{\max}+1)^2)$); too small truncates optimal kernels. \\
Dynamic multiplier 
    & $\alpha_{\mathrm{dyn}}$ 
    & $2.0$ 
    & Converts $\sigma_{\mathrm{eff}}$ to effective support radius 
    & Ensures coverage of $\sim95\%$ Gaussian mass ($2\sigma$ rule); prevents under-coverage early in TTO. \\
Center mode 
    & -- 
    & \texttt{nearest} 
    & Determines LR anchor for each HR pixel 
    & Nearest-center alignment improves stability and avoids aliasing for large upsampling factors. \\
\bottomrule
\end{tabular}
}
\caption{\textbf{Hyperparameter settings for \emph{Upsample Anything}.}
All parameters act as soft priors; the effective kernel shape is governed by test-time optimization of pixelwise anisotropic Gaussians.}
\label{tab:hyperparams}
\end{table*}

\begin{proof}
\small
Substitute $\Sigma_q=\sigma_s^2 I_2$ and $\sigma_r(q)=\sigma_r$ into \eqref{eq:gsjbu_general} to recover the numerator/denominator of \eqref{eq:jbu}.
\end{proof}

\begin{proposition}[Discrete-to-continuous convergence]
\label{prop:riemann}
Assume $F_{\mathrm{lr}}$ admits a bandlimited (or Lipschitz-continuous) interpolation $\tilde{F}:\Omega\to\mathbb{R}^{C}$,
and let the LR grid spacing be $\Delta x$.
Then as $\Delta x\to0$,
\begin{equation}
\small
\sum\nolimits_{q} \tilde{F}(q)\,\mathcal{G}_{\Lambda}\!\big(\phi(p),\phi(q)\big)\,(\Delta x)^2
\ \longrightarrow\
\int_{\Omega} \tilde{F}(x)\,\mathcal{G}_{\Lambda}\!\big(\phi(p),\phi(x)\big)\,dx,
\end{equation}
and the corresponding normalized ratios converge as well. Hence, discrete JBU converges to its continuous lifted-domain GS counterpart.
\end{proposition}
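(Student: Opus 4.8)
The plan is to recognize Proposition~\ref{prop:riemann} as a statement about convergence of Riemann sums. Fixing the query point $p$, I would define the vector-valued integrand $g_p(x) := \tilde{F}(x)\,\mathcal{G}_{\Lambda}(\phi(p),\phi(x))$ on the support region and observe that the left-hand sum $\sum_q \tilde{F}(q)\,\mathcal{G}_{\Lambda}(\phi(p),\phi(q))\,(\Delta x)^2$ is precisely the Riemann sum of $g_p$ associated to the uniform grid $\mathcal{Q}$, in which each sample carries cell area $(\Delta x)^2$. The claim then reduces to the classical fact that Riemann sums of a Riemann-integrable function converge to its integral as the mesh $\Delta x \to 0$.

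First I would establish the regularity needed to invoke that classical fact. By hypothesis $\tilde{F}$ is bandlimited or Lipschitz, hence continuous and bounded on the compact support; the Gaussian factor $\mathcal{G}_{\Lambda}(\phi(p),\phi(\cdot))=\exp(-\tfrac{1}{2}(\phi(p)-\phi(\cdot))^{\top}\Lambda^{-1}(\phi(p)-\phi(\cdot)))$ is the composition of the smooth, bounded map $u\mapsto\exp(-\tfrac{1}{2}(\phi(p)-u)^{\top}\Lambda^{-1}(\phi(p)-u))$ with the lifting $\phi(x)=(x,I(x))$, so it is continuous and bounded by $1$ provided the guidance $I$ is continuous (which holds for the bilinearly interpolated guidance used throughout). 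The product $g_p$ is therefore continuous on the compact domain, hence Riemann integrable, and applying the convergence theorem channel-wise over the $C$ components of $\tilde{F}$ yields the stated numerator limit.

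Next I would treat the normalization. The denominator is the special case $\tilde{F}\equiv 1$, so the same argument gives $\sum_q \mathcal{G}_{\Lambda}(\phi(p),\phi(q))\,(\Delta x)^2 \to \int_{\Omega}\mathcal{G}_{\Lambda}(\phi(p),\phi(x))\,dx$. Because $\mathcal{G}_{\Lambda}>0$ pointwise, this limiting integral is strictly positive, so the denominators are eventually bounded away from zero; continuity of the quotient map away from a vanishing denominator then transfers the two separate convergences to the normalized ratio. Invoking Theorem~\ref{thm:jbu_as_gs_joint} and Corollary~\ref{cor:jbu_is_discrete_gs}, which identify the discrete JBU ratio with the normalized discrete Gaussian mixture, completes the conclusion that discrete JBU converges to its continuous lifted-domain GS counterpart.

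The main obstacle I anticipate is not the limit itself but pinning down the regularity hypotheses so the integrand is genuinely Riemann integrable: one must assume (or record as implicit) that the guidance $I$ is at least continuous and that the support region is compact, so the uniform-mesh Riemann-sum theorem applies without boundary or tail complications. Under the Lipschitz/bandlimited assumption on $\tilde{F}$ and the standard continuity of interpolated guidance these conditions hold and the convergence is routine; the only genuinely delicate point is ensuring the denominator's limit is strictly positive, which follows immediately from the strict positivity of the Gaussian kernel.
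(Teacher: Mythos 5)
Your proposal is correct and follows essentially the same route as the paper's own proof sketch: identify the sum as a Riemann sum of the bounded, continuous integrand $\tilde{F}(\cdot)\,\mathcal{G}_{\Lambda}(\phi(p),\phi(\cdot))$, note the denominator's limit is strictly positive, and conclude ratio convergence by continuity of division. Your additional care in recording that the guidance $I$ must be continuous and the support compact is a reasonable tightening of the hypotheses that the paper leaves implicit under ``the stated regularity.''
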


\begin{proof}[Sketch]
\small
$\mathcal{G}_{\Lambda}(\phi(p),\phi(\cdot))$ is bounded and continuous for fixed $p$.
Under the stated regularity, Riemann sums converge to the integral and the denominators stay strictly positive (finite kernel mass). The ratio convergence follows by standard arguments (e.g., dominated convergence and continuity of division on $\mathbb{R}\setminus\{0\}$).
\end{proof}

\paragraph{Consequences.}
(i) \emph{Equivalence in the joint domain} (Thm.~\ref{thm:jbu_as_gs_joint}) shows that JBU is a GS evaluation on $(x,I(x))$ with block-diagonal covariance.
(ii) \emph{Anisotropic generalization} \eqref{eq:gsjbu_general} recovers JBU in the isotropic limit (Thm.~\ref{thm:iso_limit}), and enables per-center covariance learning (our GSJBU).
(iii) \emph{Discrete-to-continuous consistency} (Prop.~\ref{prop:riemann}) justifies replacing sums by integrals when refining the sampling grid.

\paragraph{Implementation note.}
In practice we adopt \eqref{eq:gsjbu_general} with test-time optimization of $(\Sigma_q,\sigma_r(q))$.
For stability, $\Sigma_q\succ0$ is parameterized via $R(\theta_q)\mathrm{diag}(\sigma_x^2(q),\sigma_y^2(q))R(\theta_q)^{\!\top}$ with $\sigma_x,\sigma_y{>}0$.

\begin{figure}[t!]
    \centering
    \includegraphics[width=1.0\columnwidth]{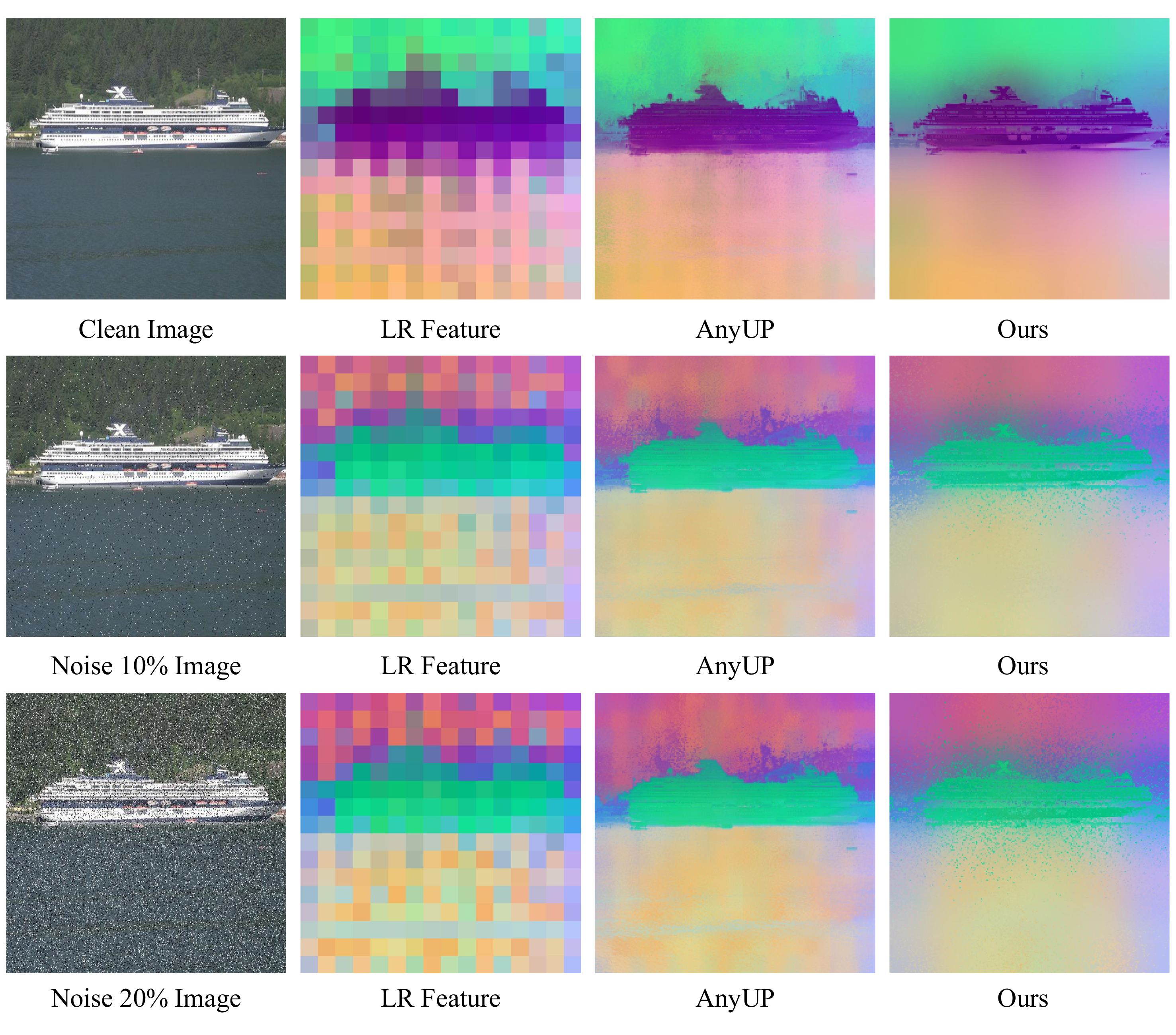}
    \caption{
    Qualitative comparison under low-SNR and noise corruption. From left to right: RGB input, low-resolution feature, upsampled feature by AnyUp, and ours (Upsample Anything). From top to bottom: clean image, 10\% noise, and 20\% noise. AnyUp remains stable under noise, while our TTO-based method overfits to noisy pixels, revealing its limitation when directly optimizing on corrupted inputs.
    }
\label{fig:vis_noise}
\end{figure}

\section{Hyperparameter Table}
The hyperparameters in Table~\ref{tab:hyperparams} function primarily as soft priors for test-time optimization. 
Since all spatial and range parameters are refined during the 50 optimization steps, the final performance depends only weakly on their initial values. 
A well-chosen initialization simply accelerates convergence, whereas  suboptimal values are eventually corrected by the optimization itself.
The table therefore summarizes practical initialization rules rather than strict hyperparameter requirements.
These rules are based on the expected receptive-field size, the dynamic range of the guidance image, and the desired locality prior, and they  lead to stable and fast convergence.

\section{Limitation under Low-SNR or Corrupted Inputs}

Although our method performs robustly across diverse datasets and even under moderate perturbations such as those in ImageNet-C, it exhibits a clear limitation when applied to images with extremely low signal-to-noise ratios or severe corruption.

Because our framework performs test-time optimization (TTO) by reconstructing the input image itself, the optimization process inherently assumes that the image contains a clean and reliable signal. When the input is degraded by noise—such as salt-and-pepper artifacts or heavy sensor perturbations—the model tends to overfit to these corruptions rather than recovering the underlying structure. Figure~\ref{fig:vis_noise} illustrates this effect: the first row shows results on a clean image, while the second and third rows demonstrate increasing corruption levels of 10\% and 20\%, respectively.

Despite the noise, pretrained Vision Foundation Models still produce reasonable feature embeddings, and AnyUp remains stable by directly upsampling feature maps. In contrast, our TTO-based Upsample Anything reconstructs the noisy signal faithfully, which unintentionally amplifies noise in both the reconstructed RGB and upsampled feature domains.

This limitation is not unique to our method but is common across all TTO-based image restoration approaches that optimize directly on corrupted inputs.
While one could incorporate a denoising stage before optimization to alleviate this issue, we consider it outside the current scope.
In summary, AnyUp demonstrates higher robustness under corrupted or low-SNR conditions, whereas our Upsample Anything excels when inputs are visually clean or when handling multi-modal signals such as RGB-D or 3D features.

{
    \small
    \bibliographystyle{ieeenat_fullname}
    \bibliography{main}

@String(ECCV= {Eur. Conf. Comput. Vis.})

@String(TOG= {ACM Trans. Graph.})

@String(AAAI = {AAAI})

@String(ECCV  = {ECCV})

@String(TOG   = {ACM TOG})

@article{everingham2010pascal,
  title={The pascal visual object classes (voc) challenge},
  author={Everingham, Mark and Van Gool, Luc and Williams, Christopher KI and Winn, John and Zisserman, Andrew},
  journal={International journal of computer vision},
  volume={88},
  number={2},
  pages={303--338},
  year={2010},
  publisher={Springer}
}

@inproceedings{cordts2016cityscapes,
  title={The cityscapes dataset for semantic urban scene understanding},
  author={Cordts, Marius and Omran, Mohamed and Ramos, Sebastian and Rehfeld, Timo and Enzweiler, Markus and Benenson, Rodrigo and Franke, Uwe and Roth, Stefan and Schiele, Bernt},
  booktitle={Proceedings of the IEEE conference on computer vision and pattern recognition},
  pages={3213--3223},
  year={2016}
}

@inproceedings{lin2014microsoft,
  title={Microsoft coco: Common objects in context},
  author={Lin, Tsung-Yi and Maire, Michael and Belongie, Serge and Hays, James and Perona, Pietro and Ramanan, Deva and Doll{\'a}r, Piotr and Zitnick, C Lawrence},
  booktitle={European conference on computer vision},
  pages={740--755},
  year={2014},
  organization={Springer}
}

@article{zhou2019semantic,
  title={Semantic understanding of scenes through the ade20k dataset},
  author={Zhou, Bolei and Zhao, Hang and Puig, Xavier and Xiao, Tete and Fidler, Sanja and Barriuso, Adela and Torralba, Antonio},
  journal={International Journal of Computer Vision},
  volume={127},
  number={3},
  pages={302--321},
  year={2019},
  publisher={Springer}
}

@inproceedings{silberman2012indoor,
  title={Indoor segmentation and support inference from rgbd images},
  author={Silberman, Nathan and Hoiem, Derek and Kohli, Pushmeet and Fergus, Rob},
  booktitle={European conference on computer vision},
  pages={746--760},
  year={2012},
  organization={Springer}
}

@article{yang2024depth,
  title={Depth anything v2},
  author={Yang, Lihe and Kang, Bingyi and Huang, Zilong and Zhao, Zhen and Xu, Xiaogang and Feng, Jiashi and Zhao, Hengshuang},
  journal={Advances in Neural Information Processing Systems},
  volume={37},
  pages={21875--21911},
  year={2024}
}

@inproceedings{ke2024repurposing,
  title={Repurposing diffusion-based image generators for monocular depth estimation},
  author={Ke, Bingxin and Obukhov, Anton and Huang, Shengyu and Metzger, Nando and Daudt, Rodrigo Caye and Schindler, Konrad},
  booktitle={Proceedings of the IEEE/CVF conference on computer vision and pattern recognition},
  pages={9492--9502},
  year={2024}
}

@inproceedings{caron2021emerging,
  title={Emerging properties in self-supervised vision transformers},
  author={Caron, Mathilde and Touvron, Hugo and Misra, Ishan and J{\'e}gou, Herv{\'e} and Mairal, Julien and Bojanowski, Piotr and Joulin, Armand},
  booktitle={Proceedings of the IEEE/CVF international conference on computer vision},
  pages={9650--9660},
  year={2021}
}

@article{oquab2023dinov2,
  title={Dinov2: Learning robust visual features without supervision},
  author={Oquab, Maxime and Darcet, Timoth{\'e}e and Moutakanni, Th{\'e}o and Vo, Huy and Szafraniec, Marc and Khalidov, Vasil and Fernandez, Pierre and Haziza, Daniel and Massa, Francisco and El-Nouby, Alaaeldin and others},
  journal={arXiv preprint arXiv:2304.07193},
  year={2023}
}

@inproceedings{he2022masked,
  title={Masked autoencoders are scalable vision learners},
  author={He, Kaiming and Chen, Xinlei and Xie, Saining and Li, Yanghao and Doll{\'a}r, Piotr and Girshick, Ross},
  booktitle={Proceedings of the IEEE/CVF conference on computer vision and pattern recognition},
  pages={16000--16009},
  year={2022}
}

@article{zhou2021ibot,
  title={ibot: Image bert pre-training with online tokenizer},
  author={Zhou, Jinghao and Wei, Chen and Wang, Huiyu and Shen, Wei and Xie, Cihang and Yuille, Alan and Kong, Tao},
  journal={arXiv preprint arXiv:2111.07832},
  year={2021}
}

@inproceedings{radford2021learning,
  title={Learning transferable visual models from natural language supervision},
  author={Radford, Alec and Kim, Jong Wook and Hallacy, Chris and Ramesh, Aditya and Goh, Gabriel and Agarwal, Sandhini and Sastry, Girish and Askell, Amanda and Mishkin, Pamela and Clark, Jack and others},
  booktitle={International conference on machine learning},
  pages={8748--8763},
  year={2021},
  organization={PmLR}
}

@inproceedings{zhai2023sigmoid,
  title={Sigmoid loss for language image pre-training},
  author={Zhai, Xiaohua and Mustafa, Basil and Kolesnikov, Alexander and Beyer, Lucas},
  booktitle={Proceedings of the IEEE/CVF international conference on computer vision},
  pages={11975--11986},
  year={2023}
}

@inproceedings{ranftl2021vision,
  title={Vision transformers for dense prediction},
  author={Ranftl, Ren{\'e} and Bochkovskiy, Alexey and Koltun, Vladlen},
  booktitle={Proceedings of the IEEE/CVF international conference on computer vision},
  pages={12179--12188},
  year={2021}
}

@inproceedings{xiao2018unified,
  title={Unified perceptual parsing for scene understanding},
  author={Xiao, Tete and Liu, Yingcheng and Zhou, Bolei and Jiang, Yuning and Sun, Jian},
  booktitle={Proceedings of the European conference on computer vision (ECCV)},
  pages={418--434},
  year={2018}
}

@article{xie2021segformer,
  title={SegFormer: Simple and efficient design for semantic segmentation with transformers},
  author={Xie, Enze and Wang, Wenhai and Yu, Zhiding and Anandkumar, Anima and Alvarez, Jose M and Luo, Ping},
  journal={Advances in neural information processing systems},
  volume={34},
  pages={12077--12090},
  year={2021}
}

@article{dosovitskiy2020image,
  title={An image is worth 16x16 words: Transformers for image recognition at scale},
  author={Dosovitskiy, Alexey},
  journal={arXiv preprint arXiv:2010.11929},
  year={2020}
}

@inproceedings{liu2022convnet,
  title={A convnet for the 2020s},
  author={Liu, Zhuang and Mao, Hanzi and Wu, Chao-Yuan and Feichtenhofer, Christoph and Darrell, Trevor and Xie, Saining},
  booktitle={Proceedings of the IEEE/CVF conference on computer vision and pattern recognition},
  pages={11976--11986},
  year={2022}
}

@inproceedings{he2016deep,
  title={Deep residual learning for image recognition},
  author={He, Kaiming and Zhang, Xiangyu and Ren, Shaoqing and Sun, Jian},
  booktitle={Proceedings of the IEEE conference on computer vision and pattern recognition},
  pages={770--778},
  year={2016}
}

@inproceedings{
    fu2024featup,
    title={FeatUp: A Model-Agnostic Framework for Features at Any Resolution},
    author={Stephanie Fu and Mark Hamilton and Laura E. Brandt and Axel Feldmann and Zhoutong Zhang and William T. Freeman},
    booktitle={The Twelfth International Conference on Learning Representations},
    year={2024},
    url={https://openreview.net/forum?id=GkJiNn2QDF}
}

@inproceedings{suri2024lift,
  title={Lift: A surprisingly simple lightweight feature transform for dense vit descriptors},
  author={Suri, Saksham and Walmer, Matthew and Gupta, Kamal and Shrivastava, Abhinav},
  booktitle={European Conference on Computer Vision},
  pages={110--128},
  year={2024},
  organization={Springer}
}

@misc{huang2025loftuplearningcoordinatebasedfeature,
      title={LoftUp: Learning a Coordinate-Based Feature Upsampler for Vision Foundation Models}, 
      author={Haiwen Huang and Anpei Chen and Volodymyr Havrylov and Andreas Geiger and Dan Zhang},
      year={2025},
      eprint={2504.14032},
      archivePrefix={arXiv},
      primaryClass={cs.CV},
      url={https://arxiv.org/abs/2504.14032}, 
}

@article{couairon2025jafar,
  title={JAFAR: Jack up Any Feature at Any Resolution},
  author={Couairon, Paul and Chambon, Loick and Serrano, Louis and Haugeard, Jean-Emmanuel and Cord, Matthieu and Thome, Nicolas},
  journal={arXiv preprint arXiv:2506.11136},
  year={2025}
}

@article{wimmer2025anyup,
  title={AnyUp: Universal Feature Upsampling},
  author={Wimmer, Thomas and Truong, Prune and Rakotosaona, Marie-Julie and Oechsle, Michael and Tombari, Federico and Schiele, Bernt and Lenssen, Jan Eric},
  journal={arXiv preprint arXiv:2510.12764},
  year={2025}
}

@inproceedings{kirillov2023segment,
  title={Segment anything},
  author={Kirillov, Alexander and Mintun, Eric and Ravi, Nikhila and Mao, Hanzi and Rolland, Chloe and Gustafson, Laura and Xiao, Tete and Whitehead, Spencer and Berg, Alexander C and Lo, Wan-Yen and others},
  booktitle={Proceedings of the IEEE/CVF international conference on computer vision},
  pages={4015--4026},
  year={2023}
}

@article{kopf2007joint,
  title={Joint bilateral upsampling},
  author={Kopf, Johannes and Cohen, Michael F and Lischinski, Dani and Uyttendaele, Matt},
  journal={ACM Transactions on Graphics (ToG)},
  volume={26},
  number={3},
  pages={96--es},
  year={2007},
  publisher={ACM New York, NY, USA}
}

@article{kerbl20233d,
  title={3D Gaussian splatting for real-time radiance field rendering.},
  author={Kerbl, Bernhard and Kopanas, Georgios and Leimk{\"u}hler, Thomas and Drettakis, George},
  journal={ACM Trans. Graph.},
  volume={42},
  number={4},
  pages={139--1},
  year={2023}
}

@inproceedings{huang20242d,
  title={2d gaussian splatting for geometrically accurate radiance fields},
  author={Huang, Binbin and Yu, Zehao and Chen, Anpei and Geiger, Andreas and Gao, Shenghua},
  booktitle={ACM SIGGRAPH 2024 conference papers},
  pages={1--11},
  year={2024}
}

@inproceedings{zhang2024gaussianimage,
  title={Gaussianimage: 1000 fps image representation and compression by 2d gaussian splatting},
  author={Zhang, Xinjie and Ge, Xingtong and Xu, Tongda and He, Dailan and Wang, Yan and Qin, Hongwei and Lu, Guo and Geng, Jing and Zhang, Jun},
  booktitle={European Conference on Computer Vision},
  pages={327--345},
  year={2024},
  organization={Springer}
}

@inproceedings{zhang2025image,
  title={Image-gs: Content-adaptive image representation via 2d gaussians},
  author={Zhang, Yunxiang and Li, Bingxuan and Kuznetsov, Alexandr and Jindal, Akshay and Diolatzis, Stavros and Chen, Kenneth and Sochenov, Anton and Kaplanyan, Anton and Sun, Qi},
  booktitle={Proceedings of the Special Interest Group on Computer Graphics and Interactive Techniques Conference Conference Papers},
  pages={1--11},
  year={2025}
}

@inproceedings{zhu2025large,
  title={Large Images Are Gaussians: High-Quality Large Image Representation with Levels of 2D Gaussian Splatting},
  author={Zhu, Lingting and Lin, Guying and Chen, Jinnan and Zhang, Xinjie and Jin, Zhenchao and Wang, Zhao and Yu, Lequan},
  booktitle={Proceedings of the AAAI Conference on Artificial Intelligence},
  volume={39},
  number={10},
  pages={10977--10985},
  year={2025}
}

@article{song2023guided,
  title={Guided linear upsampling},
  author={Song, Shuangbing and Zhong, Fan and Wang, Tianju and Qin, Xueying and Tu, Changhe},
  journal={ACM Transactions on Graphics (TOG)},
  volume={42},
  number={4},
  pages={1--12},
  year={2023},
  publisher={ACM New York, NY, USA}
}
}


\end{document}